\documentclass[11pt]{article}

\usepackage[final]{acl}

\usepackage{times}
\usepackage{latexsym}

\usepackage[T1]{fontenc}

\usepackage[utf8]{inputenc}

\usepackage{microtype}

\usepackage{inconsolata}

\usepackage{graphicx}

\usepackage{xcolor}
\usepackage{tcolorbox}
\definecolor{srapromptpink}{HTML}{E67897}
\definecolor{srapromptpinkbg}{HTML}{FFE4EC}
\definecolor{srapromptblue}{HTML}{5FCBE6}
\definecolor{srapromptbluebg}{HTML}{DEF8FF}
\tcbset{srapromptstyle/.style={
  coltitle=white,fonttitle=\bfseries,fontupper=\small,
  boxrule=0.6pt,arc=1.5mm,
  left=1.5mm,right=1.5mm,top=1.2mm,bottom=1.2mm,
  before skip=7pt,after skip=9pt
}}
\newtcolorbox{sratrainingprompt}{
  srapromptstyle,
  title=OPD training prompt,
  colframe=srapromptpink,colback=srapromptpinkbg,
  colbacktitle=srapromptpink
}
\newtcolorbox{sraevaluationprompt}{
  srapromptstyle,
  title=Evaluation prompt,
  colframe=srapromptblue,colback=srapromptbluebg,
  colbacktitle=srapromptblue
}

\usepackage{enumitem}
\usepackage{amsmath,amssymb,amsthm}
\newtheorem{theorem}{Theorem}
\newtheorem*{theorem*}{Theorem}
\newtheorem{lemma}{Lemma}
\newtheorem{remark}{Remark}

\usepackage{algorithm}
\usepackage{algpseudocode}
\algrenewcommand\algorithmicrequire{\textbf{Input:}}
\algrenewcommand\algorithmicensure{\textbf{Output:}}

\usepackage{booktabs}
\usepackage{placeins}
\usepackage{array,colortbl,multirow}

\definecolor{gain}{RGB}{0,125,65}
\definecolor{drop}{RGB}{190,45,45}

\tcbuselibrary{skins}
\definecolor{sragk}{HTML}{D6E8FA}       
\definecolor{sragc}{HTML}{FFCF9E}       
\definecolor{sraexframe}{HTML}{4A5D73}
\definecolor{sraexback}{HTML}{FBFCFD}
\newcommand{\gk}[1]{{\setlength{\fboxsep}{0pt}\colorbox{sragk}{\strut #1}}}
\newcommand{\gc}[1]{{\setlength{\fboxsep}{0pt}\colorbox{sragc}{\strut #1}}}
\newcommand{\sraexrule}{\par\smallskip{\color{sraexframe!30}\hrule height 0.4pt}\smallskip}
\newcounter{sraexample}
\newtcolorbox{sraexbox}[1]{
  enhanced,
  colframe=sraexframe,colback=sraexback,coltitle=white,colbacktitle=sraexframe,
  fonttitle=\bfseries\small,fontupper=\small,
  boxrule=0.6pt,arc=1.5mm,
  left=3mm,right=3mm,top=2mm,bottom=2mm,
  before skip=0pt,after skip=0pt,
  title={#1}
}

\title{SRA: Selective Residual Adaptation for LLM Reasoning Enhancement}

\author{
  Zhenxing Dou$^{1}$, Kun He$^{2}$, Mingyang Yi$^{2}$\thanks{Corresponding authors: Mingyang Yi and Yong Liu.}, 
  Yong Liu$^{1}$\footnotemark[1] \\
  $^{1}$Gaoling School of Artificial Intelligence, $^{2}$School of Information \\
  Renmin University of China, Beijing, China \\
  \texttt{\{yimingyang,liuyonggsai\}@ruc.edu.cn}
}

\begin{document}
\maketitle

\begin{abstract}

Knowledge distillation transfers reasoning capabilities from large language models to lightweight students. However, distilled small models still struggle with multi-step reasoning. We observe that the distribution divergences between teacher and student models along reasoning paths are highly sparse, and target tokens almost always fall within the student top-$K$ candidate set. Based on this finding, unlike approaches updating student backbones or requiring runtime teacher computation, we propose Selective Residual Adaptation (SRA), a plug-and-play framework that internalizes localized corrections. Concretely, SRA employs a Discrepancy Gate to identify divergence positions alongside a Candidate Residual Adapter for top-$K$ logit calibration to correct the mismatch. Theoretically, we establish error bounds ensuring strict sequence level performance gains. Across mathematical benchmarks on Qwen3 models from 0.6B to 4B, SRA consistently enhances distilled student models, raising average Pass@16 by up to 4.37 percentage points while preserving general capabilities.

\end{abstract}

\section{Introduction}
\label{sec:introduction}

Large language Models (LLMs) \citep{yang2025qwen3} have demonstrated strong performance on complex multi-step reasoning tasks \citep{wei2022chain} such as math \citep{shao2024deepseekmath}, coding \citep{chen2021evaluating}, and data analysis \citep{hong2025datainterpreter}. However, the substantial computational costs of LLMs and GPU memory requirements hinder efficient deployment \citep{kwon2023efficient}. To resolve this, researchers propose to use smaller language models distilled \citep{hinton2015distilling} from the larger ones to offer lower computational requirements. However, distilled students still struggle with complex reasoning \citep{shridhar2023distilling}, as their limited capacity and the distribution mismatch constrain how well they learn from the teacher, leaving a performance gap even after distillation \citep{agarwal2024onpolicy,li2025small}.

Due to this mismatch, collaborative inference offers an alternative approach to improving student reasoning \citep{huang2026relayllm, ong2025routellm,li2026ets}. In this regime, the student handles most generation steps and requests teacher corrections only at selected positions \citep{fu2025r2r,shen2026reasoning}. However, these interventions still require teacher computation at runtime, and synchronization and prefix processing can introduce additional latency and complicate deployment \citep{ye2026select}. These costs can erode the efficiency advantages of running the student alone. This raises a central question: \textit{Can we internalize these local corrections in a lightweight mechanism that operates independently, further improving reasoning without updating the student backbone or querying the teacher during inference?}

To investigate this possibility, we first compare the next-token distributions of teacher and student at shared prefixes. Our diagnostics reveal two complementary observations. First, positions where the student diverges from the teacher's preferred token are highly sparse and can be sensitively detected by total variation (TV) distance. Second, the teacher's preferred token is almost always included in the student's top-$K$ candidates. Together, these observations motivate learning local corrections by reallocating probability among existing student candidates at selected positions.

Inspired by these findings, we propose \textbf{Selective Residual Adaptation (SRA)}, a lightweight, plug-and-play framework that learns output-side probability corrections while keeping the student backbone frozen. Our SRA consists of two critical components. A light \textbf{Discrepancy Gate (DG)} predicts high-disagreement positions from hidden states of the frozen student, using supervision derived from TV distance between small and large models. A \textbf{Candidate Residual Adapter (CRA)} learns a lightweight additive correction module to correct the student's candidate logits by matching the teacher's conditional distribution over the same top-$K$ candidates. During generation, CRA is applied only at highly mismatched positions selected by DG. At all other positions, the original student distribution is retained. Both lightweight modules reuse student features and operate without teacher queries at inference, thereby enabling efficient corrections with minimal computational overhead.

Furthermore, we conduct a theoretical analysis of SRA, which establishes sufficient conditions for strict improvement in sequence-level success. Experiments on four mathematical reasoning benchmarks with Qwen3-0.6B, 1.7B, and 4B show that SRA further improves the macro-averaged accuracy of students trained with different distillation methods including standard knowledge distillation~\citep{hinton2015distilling,kim2016sequence} and on-policy distillation \citep{lu2025onpolicydistillation}. Our contributions are summarized as follows:
\begin{itemize}[noitemsep, topsep=3pt]
    \item \textbf{Selective residual adaptation.} We propose SRA, a plug-and-play framework that internalizes localized corrections without student updates or inference-time teacher queries.
    \item \textbf{Theoretical improvement guarantees.} We derive sufficient error conditions ensuring that SRA achieves strict sequence level performance gains over the base student.
    \item \textbf{Empirical gains and capability preservation.} We demonstrate consistent gains across various models and distillation methods, while preserving general capabilities.
\end{itemize}

\section{Related Work}
\label{sec:related_work}

\textbf{Knowledge Distillation.} To close the gap between large and small models, knowledge distillation proposes to align student with teacher w.r.t. soft predictions \citep{butt2026soft}, generated sequences \citep{kim2016sequence}, or reasoning explanations \citep{hsieh2023distilling,jiang2026ofa}. Recently, the on-policy distillation (OPD) \citep{lu2025onpolicydistillation} was proposed to improve knowledge distillation by obtaining teacher supervision at student-generated prefixes to mitigate the distribution shift between training and inference \citep{gu2024minillm, agarwal2024onpolicy,wang2025improved}. Building upon OPD, subsequent works improve it by refining distribution matching objectives and selecting token supervision \citep{jin2026entropy,kim2026explain}. Nevertheless, these distillation methods cannot eliminate the gap between teacher and student, so that our ``predict then correct'' SRA can be used to boost these methods as we did in Section \ref{sec:experiments}. 

\textbf{Collaborative Inference.} Collaboration between small and large models balances reasoning quality against computational cost through request routing or model switching during generation \citep{ong2025routellm,fu2025r2r}. The request routing selects a model for the entire input, whereas collaboration during generation lets the smaller model generate most tokens and seek guidance from the larger model at selected positions. Learned invocation policies or predictive disagreement signals determine where the larger model takes over \citep{huang2026relayllm,shen2026reasoning}. Although these direct interventions reduce the amount of generation handled by the larger model, they still require its predictions at runtime. In contrast, SRA replaces the teacher at inference with a lightweight extra module, as in EAGLE~\citep{li2025eagle3}.

\textbf{Parameter-Efficient Fine-Tuning.} Parameter-efficient fine-tuning \citep{hu2022lora,xin2024v} reduces the number of trainable parameters through additional modules \citep{houlsby2019parameter}, low-rank updates \citep{hu2022lora,li2026reasoning}, or continuous prefixes \citep{li2021prefix}, and can incorporate quantization to lower training memory requirements \citep{houlsby2019parameter,hu2022lora,li2021prefix,dettmers2023qlora}. Alternatively, prior work trains an independent side network conditioned on intermediate features from a frozen backbone to bypass backpropagation \citep{zhang2020side,sung2022lst,su2025reveal,zhang2026reward}. Although SRA also leaves the backbone untouched, it introduces two lightweight modules atop the frozen student that intervene only at selected token positions, substantially reducing both training and inference overhead.

\begin{figure*}[ht]
\centering
\includegraphics[width=\textwidth]{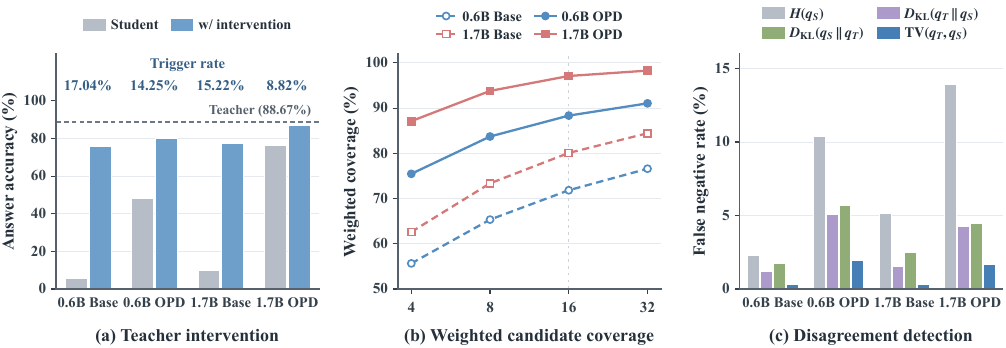}
\caption{ (a) Answer accuracy with and without teacher sampling triggered by top-1 disagreement. The annotations show trigger rates, and the dashed line indicates independent teacher accuracy. (b) The fraction of the teacher's top-$K$ probability mass covered by the student's top-$K$ set at positions where their top-1 predictions differ. (c) The fraction of top-1 disagreements within the student’s top-$K$ candidate set ($K=16$) missed by TV, forward KL, reverse KL, and student entropy when each measure selects the same number of positions. We select 256 questions from the GSM8K test split~\citep{cobbe2021gsm8k} and compare Qwen3-0.6B-Base and 1.7B-Base and OPD students~\citep{yang2025qwen3} with Qwen3-8B as the teacher. Appendix~\ref{app:empirical-motivation} provides training and evaluation details.}
\label{fig:tv-motivation}
\vspace{-3mm}
\end{figure*}

\section{Preliminary and Empirical Motivation}
\label{sec: Preliminary}
\subsection{Problem Formulation}
\label{sec:problem}
In this section, we introduce the necessary preliminaries in this paper. Considering an input question $x \sim \mathcal{D}$, the student model $\pi_{S}$ autoregressively generates a response $y = (y_1, \ldots, y_H)$ of length $H$. Here, $y_{i}$ is a single token and the student model can be a pretrained small model, a previously distilled model, or a post-trained model operating in non-thinking mode in this paper. The output probability of student model can be factorized as 
\begin{equation}
    \pi_S(y\mid x) =\prod_{t=1}^{H}\pi_S(y_{t}\mid x, y_{< t}).
\end{equation}
During distillation, our goal is to align the output of teacher and student model, where teacher model $\pi_T(y\mid x)$ is a policy model with superior reasoning capabilities, such as a larger-scale model or a model operating in thinking mode. The distillation process is usually conducted by minimizing the forward KL divergence $D_{\mathrm{KL}}(\pi_{T}\parallel \pi_{S})$ i.e., knowledge distillation \citep{kim2016sequence} or reverse KL divergence $D_{KL}(\pi_{S}\parallel \pi_{T})$, i.e., on-policy distillation \citep{lu2025onpolicydistillation}.

\subsection{Empirical Motivation}
\label{sec:motivation}   

While distillation aims to align student outputs with the teacher, some mismatch remains inevitable. We therefore seek an indicator to determine which positions to correct during student inference.

\paragraph{Mismatch on Top-1 Token.} As empirically observed in \citet{tsur2025heavywater}, the top-1 token usually dominates the output probability of LLMs. We therefore investigate whether fixing positions where student and teacher top-1 tokens disagree can close the gap between them. Specifically, during the inference of student model, we substitute the next token prediction obtained by teacher model whenever their top-1 tokens are different. As can be seen in Figure \ref{fig:tv-motivation}(a), across varied student models, the substitution almost recovers the performance of teacher model even for those unaligned models, raising OPD accuracy from 48.44\% to 80.08\% at 0.6B and from 76.17\% to 87.11\% at 1.7B, close to independent teacher accuracy of 88.67\%. Moreover, the substitutions are only used on a small fraction of tokens ($\approx$ 13.83\% on average), especially for models after distillation. 

To understand why such sparse intervention is so effective, Figure~\ref{fig:tv-motivation}(b) examines the teacher probability mass retained by the student's candidate pool. For a prefix \(s_t=(x,y_{<t})\), let \(\mathcal K_S(s_t)\) and \(\mathcal K_T(s_t)\) denote the student and teacher top-\(K\) sets. We define their weighted intersection coverage as
\begin{equation}
c_K(s_t)=
\frac{
\sum_{y_{t}\in\mathcal K_S(s_t)\cap\mathcal K_T(s_t)}
\pi_T(y_t\mid s_t)
}{
\sum_{y_t\in\mathcal K_T(s_t)}
\pi_T(y_t\mid s_t)
}.
\end{equation}
Evaluating $c_K(s_t)$ exclusively at positions with top-1 disagreements reveals substantial coverage across all settings. This demonstrates that even when top-1 predictions diverge, the student's candidate set retains the vast majority of the teacher's high-probability mass. Hence, mismatches largely stem from re-ranking among shared candidates rather than missing viable tokens, which explains why intervening within the candidate set suffices.

\paragraph{Total Variation (TV) Indicator.} The discussion above indicates that the mismatch between student and teacher top-1 tokens serves as an effective criterion for deciding whether to activate correction. However, we find this indicator is hard to learn in practice (see Appendix~\ref{app:gate-supervision-analysis} for details). Thus, we seek another indicator which is highly related to the top-1 mismatch. Intuitively, the top-1 mismatch results in a gap between the output distributions of student and teacher models. We select token positions exhibiting high total variation distance, forward KL divergence, reverse KL divergence, and entropy \citep{cover2006elements} between the student and teacher to examine whether these distributional metrics can reliably identify mismatched top-1 tokens.

The results are summarized in Figure~\ref{fig:tv-motivation}(c). The output distribution is approximated by the normalized distribution over top-$K$ tokens for both teacher and student models as 
\begin{equation}
q_M(y_{t}\mid s_{t})=
\frac{\pi_M(y_{t}\mid s_{t})}
{\sum_{y_{t}\in\mathcal K(s_{t})}\pi_M(y_{t}\mid s_{t})},
\label{eq:normalized}
\end{equation}
where $\mathcal{K}(s_{t})$ is the student's top-$K$ candidates and $\pi_{M}$ is student or teacher model. We select positions satisfying $\mathrm{TV}(q_T, q_S) \ge 0.1$ and compare against other distributional measures that select an equal number of positions with the highest scores. For these positions, we report their alignments with the positions with mismatched top-1 tokens, represented by false negative rate (FNR):
\begin{equation}
    \mathrm{FNR}=
\frac{\#\{\text{unselected disagreements}\}}
{\#\{\text{all top-1 disagreements}\}}.
\end{equation}
As can be seen, TV achieves the lowest FNR in all four settings, at 0.32\%--1.97\%, indicating that TV can be used as a good indicator in practice. A sensitivity analysis of $\tau$ is provided in Appendix~\ref{app:threshold-sensitivity}.

We further compare how TV and KL respond when the student and teacher distributions are close. Fix a prefix $s_t$ and its student top-$K$ candidate set $\mathcal K(s_t)$. Let $q_T$ denote the teacher distribution normalized on this set. We consider nearby student distributions of the form $q_S^{(\varepsilon)}=q_T+\varepsilon\Delta$, where the fixed vector $\Delta\ne0$ with $\sum_i\Delta_i=0$ specifies changes in candidate probabilities and $\varepsilon>0$ controls their magnitude. 
\begin{theorem}[Local sensitivity of TV and KL]
\label{thm:tv-local-response}
Suppose $q_{T,i}>0$ for every candidate $i$. For sufficiently small $\varepsilon>0$,
\begin{equation}
\operatorname{TV}(q_T, q_S^{(\varepsilon)})
=\frac{\varepsilon}{2}\|\Delta\|_1,
\label{eq:tv-local-expansion}
\end{equation}
\begin{equation}
D_{\mathrm{KL}}(q_T\|q_S^{(\varepsilon)})
=\frac{\varepsilon^2}{2}\sum_{i=1}^K\frac{\Delta_i^2}{q_{T,i}}
+O(\varepsilon^3).
\label{eq:kl-local-expansion}
\end{equation}
The reverse direction $D_{\mathrm{KL}}(q_S^{(\varepsilon)}\|q_T)$ has the same second-order term.
\end{theorem}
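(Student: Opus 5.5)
The plan is to handle the two claims separately, since both reduce to elementary computations on a finite simplex. For the TV identity, we write $\operatorname{TV}(q_T,q_S^{(\varepsilon)})=\frac12\sum_{i=1}^K|q_{S,i}^{(\varepsilon)}-q_{T,i}|$. By definition $q_S^{(\varepsilon)}-q_T=\varepsilon\Delta$, so this sum equals $\frac12\sum_i|\varepsilon\Delta_i|=\frac{\varepsilon}{2}\|\Delta\|_1$ because $\varepsilon>0$. This holds exactly for every $\varepsilon>0$ such that $q_S^{(\varepsilon)}$ is a valid distribution. The qualifier ``sufficiently small'' is needed only to guarantee $q_{S,i}^{(\varepsilon)}=q_{T,i}+\varepsilon\Delta_i>0$ for all $i$. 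Since $q_{T,i}>0$ and there are finitely many coordinates, any $\varepsilon<\varepsilon_0:=\min_{i:\Delta_i<0} q_{T,i}/|\Delta_i|$ works. The condition $\sum_i\Delta_i=0$ keeps the total mass equal to one.

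For the forward KL, we write
\[
D_{\mathrm{KL}}(q_T\|q_S^{(\varepsilon)})=-\sum_i q_{T,i}\log\Bigl(1+\varepsilon\frac{\Delta_i}{q_{T,i}}\Bigr)
\]
and Taylor expand $\log(1+u)=u-\frac{u^2}{2}+O(u^3)$ with $u_i=\varepsilon\Delta_i/q_{T,i}$. Because $q_{T,i}>0$ is fixed, $|u_i|\le \varepsilon\max_i|\Delta_i|/q_{T,i}$ is uniformly small. Also, $K$ is finite, so the remainders combine into a single $O(\varepsilon^3)$ whose constant depends on $\Delta$ and $\min_i q_{T,i}$. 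Expanding term by term:
\begin{itemize}
\item The first-order term is $-\varepsilon\sum_i\Delta_i=0$ by the zero-sum constraint.
\item The second-order term is $+\frac{\varepsilon^2}{2}\sum_i \Delta_i^2/q_{T,i}$, which is exactly \eqref{eq:kl-local-expansion}.
\end{itemize}

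For the reverse direction, we write
\[
D_{\mathrm{KL}}(q_S^{(\varepsilon)}\|q_T)=\sum_i (q_{T,i}+\varepsilon\Delta_i)\log(1+u_i).
\]
Expanding to second order gives
\[
\sum_i (q_{T,i}+\varepsilon\Delta_i)\Bigl(u_i-\tfrac{u_i^2}{2}\Bigr)+O(\varepsilon^3).
\]
This sum consists of three contributions:
\begin{itemize}
\item The linear term $\varepsilon\sum_i\Delta_i$, which again vanishes.
\item $-\frac{\varepsilon^2}{2}\sum_i\Delta_i^2/q_{T,i}$, coming from $q_{T,i}\cdot(-u_i^2/2)$.
\item $+\varepsilon^2\sum_i\Delta_i^2/q_{T,i}$, coming from $\varepsilon\Delta_i\cdot u_i$.
\end{itemize}
The net second-order coefficient is therefore $\frac{\varepsilon^2}{2}\sum_i\Delta_i^2/q_{T,i}$, the same as in the forward case.

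None of these steps is deep. The only point requiring care is making the remainder uniform, i.e., justifying that the cubic Taylor remainders are bounded by $C\varepsilon^3$ with a single constant. I would handle this by restricting to $\varepsilon\le\varepsilon_0/2$. On that range $|u_i|\le 1/2$ for every $i$, and the Lagrange form of the remainder of $\log(1+u)$ is bounded by $|u|^3/(3(1-|u|)^3)\le \frac{8}{3}|u|^3$. Summing these bounds over the $K$ coordinates yields the explicit constant.
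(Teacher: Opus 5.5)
Your proof is correct and follows the paper's argument essentially step for step: the exact TV computation, the expansion $\log(1+u)=u-\tfrac12u^2+O(u^3)$ with the linear term killed by $\sum_i\Delta_i=0$, and, for the reverse direction, multiplying $(1+u_i)$ into that expansion, which is just the paper's identity $(1+u)\log(1+u)=u+\tfrac12u^2+O(u^3)$ written out. One small slip in your remainder step: because $\varepsilon_0$ is a minimum over $\Delta_i<0$ only, $\varepsilon\le\varepsilon_0/2$ does not give $|u_i|\le\tfrac12$ for indices with $\Delta_i>0$ (e.g.\ $q_T=(0.01,0.99)$, $\Delta=(1,-1)$ gives $u_1=49.5$), so either take the minimum over all $i$ with $\Delta_i\neq0$ as the paper does, or note that for $u>0$ the Lagrange remainder $u^3/(3(1+\xi)^3)$ is already at most $u^3/3$.
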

TV therefore grows linearly in $\varepsilon$, whereas both KL directions grow only quadratically. A small shift of probability among the candidates, which may already flip the top-1 token, is thus more visible to TV than to KL, consistent with Figure~\ref{fig:tv-motivation}(c). The proof is given in Appendix~\ref{app:tv-local-sensitivity}.

\begin{figure*}[t]
    \centering
    \includegraphics[width=\textwidth]{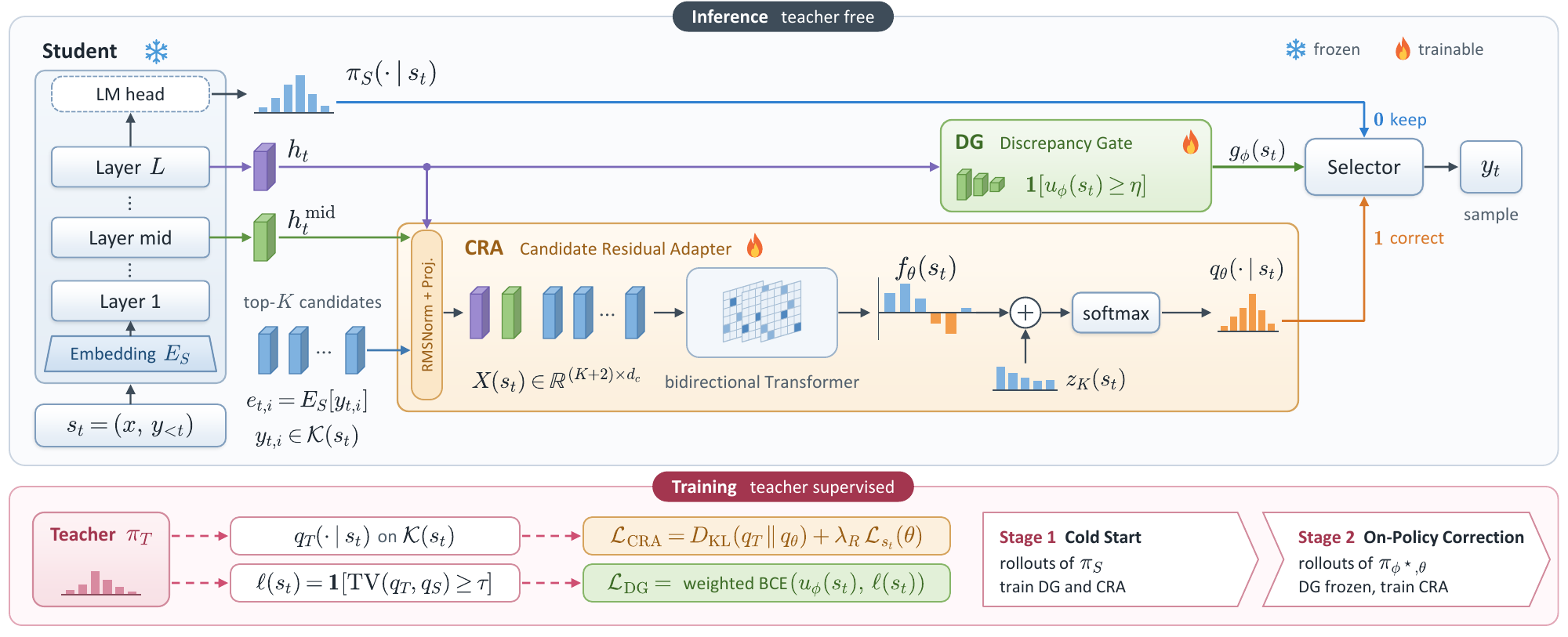}
    \caption{Overview of SRA. The upper panel shows inference. DG reads the final hidden state $h_t$ of the frozen student and decides whether to correct the current step. If $g_\phi(s_t)=1$, CRA predicts logit residuals over the student's top-$K$ candidates and the next token is sampled from $q_\theta$, otherwise it is sampled from $\pi_S$. The lower panel shows training, where the teacher provides $q_T$ and $\ell(s_t)$ to supervise CRA and DG in two stages.}    
    \label{fig:sra-overview}
    \vspace{-3mm}
\end{figure*}

\section{Selective Residual Adaptation}
\label{sec:method}

Building on the empirical findings, we propose \textbf{Selective Residual Adaptation (SRA)} to refine the student's generation distribution through position selection and candidate correction. Intuitively, rectifying the generation requires two essential steps: (i) predicting the positions where intervention is required, and (ii) calibrating the output based on the current hidden state. Thus, our SRA consists of two lightweight modules: a \textbf{Discrepancy Gate (DG)} that identifies positions for correction, and a \textbf{Candidate Residual Adapter (CRA)} that adjusts candidate probabilities at those positions. Figure~\ref{fig:sra-overview} shows how DG controls whether CRA's
candidate correction is applied at each decoding step.

\subsection{Discrepancy Gate}
\label{sec:dg}
\label{sec:teacher_feedback}

The analysis in Section~\ref{sec:motivation} supports using TV distance to indicate whether correction is required. To avoid evaluating the teacher during inference, we train DG to predict intervention positions from student features. The prediction builds upon the final hidden states of the student model. Following \citet{li2024eagle,cai2024medusa,li2025eagle3}, we build DG on this layer, which contains comprehensive representations such that a lightweight module atop the backbone suffices to predict discrepancies between the student and teacher.

Concretely, for each prefix $s_{t}$, DG is a lightweight MLP that takes the student final hidden state $h_{t}$ as input and predicts the discrepancy label
\begin{equation}
\ell(s_{t})=\mathbf 1[\mathrm{TV}(q_{T}(\cdot \mid s_{t}), q_{S}(\cdot\mid s_{t}))\geq\tau].
\label{eq:tv_label}
\end{equation}
The gate score and decision are
\begin{equation}
u_\phi(s_t) = \sigma\!\left(\operatorname{MLP}_\phi(h_t)\right),
\label{eq:gate_prob}
\end{equation}
\begin{equation}
g_\phi(s_t) = \mathbf 1[u_\phi(s_t)\geq\eta], \quad \eta \in (0,1).
\label{eq:gate}
\end{equation}
Here, $\sigma$ is the sigmoid function, which maps the MLP output to a score between 0 and 1. The threshold $\tau$ defines the training label, whereas $\eta$ converts the predicted score into a selection decision.

Since positions with $\ell(s_t)=1$ are much rarer than those with $\ell(s_t)=0$, we train DG with a weighted binary cross-entropy~\citep{Xie_2015_ICCV}:
\begin{equation}
\begin{aligned}
\mathcal L_{\mathrm{DG}}(s_{t})
={}&-w_+\ell(s_{t})\log u_\phi(s_{t})\\
&-(1-\ell(s_{t}))\log(1-u_\phi(s_{t})),
\end{aligned}
\label{eq:gate_loss}
\end{equation}
where $w_+=N_-/N_+$ is the ratio of negative to positive positions in each training batch, which prevents the loss from being dominated by low-discrepancy positions.

\subsection{Candidate Residual Adapter}
\label{sec:cra}
\label{sec:residual_policy}
After specifying the correction positions, we should decide how to fix them. As clarified in Figure \ref{fig:tv-motivation}(b) of Section \ref{sec:motivation}, the teacher's top-$K$ tokens over the full vocabulary are mostly contained in the student's top-$K$ set, even for unaligned small models. We therefore correct the student by re-ranking its top-$K$ tokens when correction is required.

To this end, we propose CRA, which uses a small bidirectional Transformer to predict a logit residual for each candidate. In contrast to DG, its inputs comprise the student's final hidden state $h_t$, an intermediate hidden state $h^{\mathrm{mid}}_t$, and candidate token embeddings, since an intermediate layer retains earlier semantic information~\citep{li2025eagle3}.  

Concretely, let $\mathcal K(s_t)=\{y_{t,1},\ldots,y_{t,K}\}$ denote the student's top-$K$ candidate tokens at prefix $s_t$. For each candidate $y_{t,i}$, we obtain its embedding $e_{t,i}=E_S[y_{t,i}]$, where $E_S$ is the student's input embedding matrix. We map each input type to a common dimension $d_c$ using separate RMSNorm~\citep{zhang2019rmsnorm} and linear projection module $P$:
\begin{equation}
\widetilde h_t
=P_{\mathrm h}\operatorname{RMSNorm}_{\mathrm h}(h_t).
\end{equation}
\begin{equation}
\widetilde h_t^{\mathrm{mid}}
=P_{\mathrm m}\operatorname{RMSNorm}_{\mathrm m}(h_t^{\mathrm{mid}}).
\end{equation}
\begin{equation}
\widetilde e_{t,i} =P_{\mathrm e}\operatorname{RMSNorm}_{\mathrm e}(e_{t,i}),
\qquad i=1,\ldots,K.
\end{equation}
Stacking the two projected hidden states and the $K$ projected candidate embeddings as rows gives
\begin{equation}
X(s_t)\!=\!
\big[
\widetilde h_t,\,
\widetilde h_t^{\mathrm{mid}},\,
\widetilde e_{t,1},\,
\ldots,\,
\widetilde e_{t,K}
\big]^\top
\!\!\in\!\mathbb R^{(K+2)\times d_c}.
\end{equation}
With the $K + 2$ embedding sequence inputted, they are processed by a bidirectional transformer with self-attention without a causal mask. Each candidate can attend to both prefix representations and all candidates, allowing its correction to depend on the context and the available alternatives. We normalize each candidate's output vector and apply the same linear layer to map it to a scalar score, yielding K scores $b_\theta(X(s_t))\in\mathbb R^K$, which we center to obtain the residual:
\begin{equation}
f_\theta(s_t) = \mathcal C\!\left(b_\theta(X(s_t))\right),
\label{eq:residual_centering}
\end{equation}
\begin{equation}
\mathcal C(x) = x - \frac1K\left(\sum_{i=1}^K x_i\right)\mathbf 1_{\mathrm{v}},
\label{eq:centering_op}
\end{equation}
where $\mathbf 1_\mathrm{v}$ is the all-ones vector.

Finally, with the obtained prediction residual, let $z_{K}(s_{t})$ denote the original student logits on the top-$K$ candidate set. The corrected candidate distribution is
\begin{equation}
q_\theta(\cdot\mid s_t)
=\operatorname{softmax}\!\left(z_{K}(s_t)+f_\theta(s_t)\right).
\label{eq:residual}
\end{equation}
Student logits provide the initial preference, while CRA predicts adjustments for each candidate.

The training objective of CRA consists of two complementary parts. First, we minimize the forward KL divergence between the teacher distribution $q_T$ and the corrected distribution $q_\theta$ to align their candidate probabilities. Second, we directly supervise the predicted logit residual, providing an explicit target for the adjustments that CRA should learn. The ideal residual of $f_{\theta}(s_{t})$ is
\begin{equation}
\delta^\star(s_t)
=\mathcal C\!\left(
\log q_T(\cdot\mid s_t)-\log q_S(\cdot\mid s_t)
\right).
\label{eq:ideal_residual}
\end{equation}
We therefore optimize $f_{\theta}(s_{t})$ to match $\delta^\star(s_t)$, using a Smooth L1 loss $\mathcal{L}_{s_{t}}(\theta)$ \citep{Girshick_2015_ICCV}, with its formulation provided in Appendix~\ref{app:sra-exp-training}.

The final objective combines probability matching with direct residual supervision:
\begin{equation}
\mathcal L_{\mathrm{CRA}}(s_t)
={}D_{\mathrm{KL}}\!\left(
q_T\,\|\,q_\theta \right) +\lambda_R\mathcal L_{s_{t}}(\theta),
\label{eq:cra_loss}
\end{equation}
where $\lambda_R$ weights the residual loss.

\subsection{Joint Training}
\label{sec:two_stage}
During inference, combining DG and CRA defines our generation policy $\pi_{\phi,\theta}$, which consists of the base student model $\pi_{S}$ and two extra modules, i.e., the indicator $g_{\phi}$ and the corrector $f_{\theta}$. Concretely, under prefix $s_{t}$, when $g_\phi(s_{t})=0$, the next token is sampled from the full student distribution $\pi_S(\cdot\mid s_{t})$. When $g_\phi(s_{t})=1$, it is sampled from $q_\theta(\cdot\mid s_{t})$ corrected by $f_{\theta}(s_{t})$ over the candidate set $\mathcal K(s_{t})$ obtained by $\pi_{S}(\cdot \mid s_{t})$. Candidate correction changes the generated token and the subsequent prefixes. To account for these changes, we first learn both modules on original student trajectories, then continue training CRA on trajectories generated by the current enhanced policy. Appendix~\ref{app:sra-exp-training} summarizes the procedure.

\paragraph{Stage 1: Cold Start.} 
We generate responses offline under $\pi_S$ and compute the teacher distributions and labels at each response prefix. We train DG on these trajectories by minimizing $\mathcal L_{\mathrm{DG}}$ \eqref{eq:gate_loss} over all valid response positions. Meanwhile, we train CRA by minimizing $\mathcal L_{\mathrm{CRA}}$ \eqref{eq:cra_loss} only where $\ell(s_t)=1$ indicated by teacher supervision. After this stage, we fix the gate threshold $\eta$ in \eqref{eq:gate} and the gate parameters $\phi^\star$.

\paragraph{Stage 2: On-Policy Correction.} After the training on offline data, we train the model in an online manner to avoid the potential train-inference distribution shift \citep{agarwal2024onpolicy,wang2025improved}. 
We generate new responses with the current policy $\pi_{\phi^\star, \theta}$ and update CRA module $f_{\theta}$ at positions where $g_{\phi^\star}(s_t)=1$ by minimizing $\mathcal{L}_{\rm CRA}$ \eqref{eq:cra_loss}. These include false positive selections, allowing the adapter to learn from inputs it encounters when actually invoked. After each round of updates, we generate new responses with the updated CRA and repeat until the training budget is exhausted.

\begin{remark}
    In our experiments, SRA brings larger gains when the student is already well aligned with the teacher, presumably because a lightweight module cannot represent large and diverse corrections. We therefore build SRA on top of distilled students, and Section~\ref{sec:main-results} shows that it combines well with mainstream distillation methods.
\end{remark}

\subsection{Theoretical Analysis}
\label{sec:theory}
We show that SRA provably improves the student when its corrections are accurate enough. Let $J(\pi)$ denote the expected task reward of a policy $\pi$, and let $\mathcal E_{\mathrm{SRA}}$ denote the accumulated approximation error of SRA (see Appendix~\ref{app:sra-error-definition} for details).
\begin{theorem}
\label{thm:sra_improvement}
If the task reward lies in $[0,1]$, then
\begin{equation}
J(\pi_{\phi,\theta})-J(\pi_S)
\geq J(\pi_T)-J(\pi_S) -\mathcal E_{\mathrm{SRA}}.
\label{eq:sra_sequence_bound}
\end{equation}
\end{theorem}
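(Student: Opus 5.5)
The inequality is equivalent to $J(\pi_{\phi,\theta}) \ge J(\pi_T) - \mathcal E_{\mathrm{SRA}}$, so the plan is to bound $J(\pi_T)-J(\pi_{\phi,\theta})$ from above by an accumulated per-step discrepancy. This accumulated discrepancy is what $\mathcal E_{\mathrm{SRA}}$ is meant to collect (Appendix~\ref{app:sra-error-definition}). The tool is the standard sequence-level coupling, or performance-difference, argument for autoregressive policies with bounded terminal reward.

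\textbf{Step 1: telescoping.} Define hybrid policies $\pi^{(t)}$ that follow $\pi_{\phi,\theta}$ for the first $t$ tokens and $\pi_T$ thereafter, so that $\pi^{(0)}=\pi_T$ and $\pi^{(H)}=\pi_{\phi,\theta}$. Then
\[
J(\pi_T)-J(\pi_{\phi,\theta})=\sum_{t=1}^{H}\big(J(\pi^{(t-1)})-J(\pi^{(t)})\big).
\]
The two policies in the $t$-th summand agree except at step $t$, where one samples from $\pi_T(\cdot\mid s_t)$ and the other from $\pi_{\phi,\theta}(\cdot\mid s_t)$. The prefix $s_t$ is distributed as under $\pi_{\phi,\theta}$, and the continuation is under $\pi_T$. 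Let $V_T(s_{t+1})\in[0,1]$ be the teacher's expected reward-to-go. Since rewards lie in $[0,1]$, each summand is at most
\[
\mathbb E_{s_t\sim \pi_{\phi,\theta}}\big[\mathrm{TV}(\pi_T(\cdot\mid s_t),\pi_{\phi,\theta}(\cdot\mid s_t))\big],
\]
using $|\mathbb E_p V-\mathbb E_q V|\le \mathrm{TV}(p,q)$ for $V\in[0,1]$.

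\textbf{Step 2: per-step decomposition by the gate.} I split each per-step TV according to $g_\phi(s_t)$ and $\ell(s_t)$.
\begin{itemize}
\item If $g_\phi=1$, the policy is $q_\theta$ supported on $\mathcal K(s_t)$. The triangle inequality through $q_T$ gives
\[
\mathrm{TV}(\pi_T,q_\theta)\le \mathrm{TV}(\pi_T,q_T)+\mathrm{TV}(q_T,q_\theta).
\]
The first term is the teacher mass outside the student candidate set, namely $1-\sum_{y\in\mathcal K}\pi_T(y\mid s_t)$. The second term is the CRA fitting error, which Pinsker's inequality bounds by $\sqrt{D_{\mathrm{KL}}(q_T\|q_\theta)/2}$; this is the quantity CRA's objective \eqref{eq:cra_loss} controls.
\item If $g_\phi=0$ and $\ell=0$, the same truncation argument gives a TV below $\tau$ plus the truncation mass.
\item If $g_\phi=0$ and $\ell=1$, the gate has missed a position that should have been corrected. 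I bound this crudely by $\mathrm{TV}(\pi_T,\pi_S)\le 1$, weighted by the gate's false-negative probability.
\end{itemize}
Summing these terms over $t$ under the occupancy of $\pi_{\phi,\theta}$ should reproduce $\mathcal E_{\mathrm{SRA}}$. I would align the decomposition with whatever the appendix definition lists: truncation, CRA, gate-miss and threshold terms. Subtracting $J(\pi_S)$ from both sides then gives \eqref{eq:sra_sequence_bound}.

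\textbf{Main obstacle.} The main difficulty is matching the error terms exactly to the paper's definition of $\mathcal E_{\mathrm{SRA}}$, and in particular the distribution under which expectations are taken. The telescoping argument naturally produces errors under the prefix distribution of $\pi_{\phi,\theta}$, which is the on-policy distribution that Stage~2 trains on. If $\mathcal E_{\mathrm{SRA}}$ is instead defined under $\pi_T$ or $\pi_S$ prefixes, I would reverse the roles of the two policies in the hybrids, letting $\pi_T$ generate the prefix and $\pi_{\phi,\theta}$ the continuation. I would check that the $[0,1]$ boundedness of value functions still gives the same TV bound; it should, because both value functions lie in $[0,1]$.
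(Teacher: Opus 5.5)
Your skeleton matches the paper's proof. The hybrid-policy telescoping is the paper's value-function telescoping under the occupancy of $\pi_{\phi,\theta}$, and that is also the distribution used in $\mathcal E_{\mathrm{SRA}}$, so no role reversal is needed. The $[0,1]$ bound on teacher values gives the per-step TV, and your gated case is correct: $\mathrm{TV}(\pi_T,\bar q_T)=1-m_T$, and the triangle inequality yields the paper's gated contribution $g_\phi\bigl[(1-m_T)+\mathrm{TV}(q_T,q_\theta)\bigr]$.

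The gap is at ungated positions. The paper defines
\[
\mathcal E_{\mathrm{SRA}}=\mathbb E_{\pi_{\phi,\theta}}\sum_t\bigl[\epsilon_K+(1-g_\phi)\mathrm{TV}(q_S,q_T)+g_\phi\mathrm{TV}(q_T,q_\theta)\bigr],
\]
with $\epsilon_K=g_\phi(1-m_T)+(1-g_\phi)\max\{1-m_S,1-m_T\}$, where $m_M$ is the mass $\pi_M$ puts on $\mathcal K(s_t)$. It contains no gate-miss or threshold term and does not depend on $\ell$. Splitting on $\ell$ and replacing $\mathrm{TV}(q_T,q_S)$ by $\tau$ or by $1$ proves the bound only with a larger error quantity, which is a weaker statement. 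You should keep $\mathrm{TV}(q_T,q_S)$ itself.

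More substantively, when $g_\phi=0$ the deployed distribution is the full-vocabulary $\pi_S$. Your ``same truncation argument'' (triangle inequality through $\bar q_T$ and $\bar q_S$) gives $(1-m_T)+\mathrm{TV}(q_T,q_S)+(1-m_S)$. This is strictly larger than the required $\max\{1-m_S,1-m_T\}+\mathrm{TV}(q_T,q_S)$ whenever both masses are below one, which is the generic case with softmax probabilities.

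The missing idea is to avoid paying for the two truncations separately. With $\alpha=\min\{m_S,m_T\}$, the overlap identity for TV gives
\[
1-\mathrm{TV}(\pi_T,\pi_S)=\sum_{a\in\mathcal V}\min\{\pi_S(a),\pi_T(a)\}\ \ge\ \alpha\sum_{a\in\mathcal K}\min\{q_S(a),q_T(a)\}=\alpha\bigl(1-\mathrm{TV}(q_T,q_S)\bigr).
\]
Hence $\mathrm{TV}(\pi_T,\pi_S)\le 1-\alpha+\alpha\,\mathrm{TV}(q_T,q_S)\le\max\{1-m_S,1-m_T\}+\mathrm{TV}(q_T,q_S)$. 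Equivalently, write $\pi_T$ and $\pi_S$ as mixtures with the common weight $\alpha$ on $\bar q_T$ and $\bar q_S$, and apply joint convexity of TV.

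With this local bound at $g_\phi=0$ and your bound at $g_\phi=1$, summing under the $\pi_{\phi,\theta}$ occupancy gives exactly $\mathcal E_{\mathrm{SRA}}$, and your argument then coincides with the paper's proof. The Pinsker remark is unnecessary, since $\mathcal E_{\mathrm{SRA}}$ is stated in TV.
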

Hence SRA improves the student whenever $\mathcal E_{\mathrm{SRA}}$ is smaller than the teacher's advantage. The proof is given in Appendix~\ref{app:return-bound}.

\begin{table*}[!t]
\centering\begingroup
\fontsize{9}{9.5}\selectfont
\setlength{\tabcolsep}{1.5pt}
\renewcommand{\arraystretch}{1.0}
\setlength{\aboverulesep}{1.4pt}
\setlength{\belowrulesep}{1.8pt}
\setlength{\dimen0}{\dimexpr\textwidth-20\tabcolsep\relax}
\begin{tabular}{@{}>{\raggedright\arraybackslash}p{.14\dimen0}*{8}{>{\raggedleft\arraybackslash}p{.08\dimen0}}*{2}{>{\raggedleft\arraybackslash}p{.11\dimen0}}@{}}
\toprule
 & \multicolumn{2}{c}{AIME24} & \multicolumn{2}{c}{AIME25} & \multicolumn{2}{c}{AMC23} & \multicolumn{2}{c}{MATH-500} & \multicolumn{2}{c}{Average} \\
\cmidrule(lr){2-3}\cmidrule(lr){4-5}\cmidrule(lr){6-7}\cmidrule(lr){8-9}\cmidrule(l){10-11}
Method & Avg@16 & Pass@16 & Avg@16 & Pass@16 & Avg@16 & Pass@16 & Avg@16 & Pass@16 & Avg@16 & Pass@16 \\
\midrule
\multicolumn{11}{c}{\textit{Qwen3-0.6B-Base}} \\
\addlinespace[0.4pt]
KD & 1.46 & \textbf{13.33} & 0.21 & 3.33 & 16.72 & 62.50 & 36.91 & 76.40 & 13.82 & 38.89 \\
\rowcolor{black!6}
\quad w/~SRA & 1.46 & \underline{10.00} & 0.21 & 3.33 & 17.97 & 62.50 & 37.41 & 77.20 & 14.26\textsuperscript{\textcolor{gain}{+0.44}} & 38.26\textsuperscript{\textcolor{drop}{-0.63}} \\
\addlinespace[0.6pt]
OPD & \underline{1.67} & \underline{10.00} & 0.00 & 0.00 & 25.31 & \underline{67.50} & 43.24 & 76.20 & 17.55 & 38.43 \\
\rowcolor{black!6}
\quad w/~SRA & \textbf{2.08} & \underline{10.00} & \underline{0.83} & 6.67 & \textbf{26.56} & \underline{67.50} & \textbf{47.70} & \underline{80.80} & \textbf{19.29}\textsuperscript{\textcolor{gain}{+1.74}} & 41.24\textsuperscript{\textcolor{gain}{+2.81}} \\
\addlinespace[0.6pt]
EOPD & \underline{1.67} & \underline{10.00} & \underline{0.83} & \underline{13.33} & \textbf{26.56} & 65.00 & 47.16 & 78.60 & 19.06 & \underline{41.73} \\
\rowcolor{black!6}
\quad w/~SRA & \textbf{2.08} & \underline{10.00} & \textbf{1.04} & \textbf{16.67} & \underline{26.25} & \textbf{70.00} & \underline{47.23} & \textbf{81.80} & \underline{19.15}\textsuperscript{\textcolor{gain}{+0.09}} & \textbf{44.62}\textsuperscript{\textcolor{gain}{+2.89}} \\
\midrule
\multicolumn{11}{c}{\textit{Qwen3-1.7B-Base}} \\
\addlinespace[0.4pt]
KD & 3.75 & 23.33 & 4.17 & 16.67 & 33.44 & \underline{80.00} & 57.85 & 88.00 & 24.80 & 52.00 \\
\rowcolor{black!6}
\quad w/~SRA & 5.63 & 23.33 & 2.92 & 20.00 & 34.84 & \textbf{85.00} & 59.66 & \underline{90.00} & 25.76\textsuperscript{\textcolor{gain}{+0.96}} & 54.58\textsuperscript{\textcolor{gain}{+2.58}} \\
\addlinespace[0.6pt]
OPD & 7.50 & 20.00 & 3.75 & \underline{23.33} & 39.69 & \underline{80.00} & \underline{65.64} & 89.80 & 29.14 & 53.28 \\
\rowcolor{black!6}
\quad w/~SRA & \textbf{8.33} & \textbf{30.00} & \textbf{5.00} & 20.00 & \underline{40.78} & \textbf{85.00} & \textbf{66.41} & \textbf{90.20} & \textbf{30.13}\textsuperscript{\textcolor{gain}{+0.99}} & \underline{56.30}\textsuperscript{\textcolor{gain}{+3.02}} \\
\addlinespace[0.6pt]
EOPD & 6.25 & \underline{26.67} & \underline{4.79} & \underline{23.33} & \textbf{40.94} & 77.50 & 65.43 & 89.20 & \underline{29.35} & 54.18 \\
\rowcolor{black!6}
\quad w/~SRA & \underline{8.12} & \textbf{30.00} & 4.58 & \textbf{26.67} & 39.53 & \underline{80.00} & 65.12 & 89.60 & 29.34\textsuperscript{\textcolor{drop}{-0.01}} & \textbf{56.57}\textsuperscript{\textcolor{gain}{+2.39}} \\
\midrule
\multicolumn{11}{c}{\textit{Qwen3-4B-Base}} \\
\addlinespace[0.4pt]
KD & 15.00 & \underline{40.00} & 12.08 & 30.00 & 52.97 & 85.00 & 75.71 & 93.60 & 38.94 & 62.15 \\
\rowcolor{black!6}
\quad w/~SRA & 15.00 & 36.67 & 13.13 & 36.67 & 52.66 & \textbf{90.00} & 75.91 & \underline{94.40} & 39.17\textsuperscript{\textcolor{gain}{+0.23}} & 64.43\textsuperscript{\textcolor{gain}{+2.28}} \\
\addlinespace[0.6pt]
OPD & 14.79 & 36.67 & \textbf{15.83} & 33.33 & \underline{54.22} & \textbf{90.00} & \underline{78.00} & 94.20 & 40.71 & 63.55 \\
\rowcolor{black!6}
\quad w/~SRA & \underline{16.04} & \underline{40.00} & 14.38 & \textbf{46.67} & \textbf{56.88} & \textbf{90.00} & \textbf{78.49} & \textbf{95.00} & \textbf{41.45}\textsuperscript{\textcolor{gain}{+0.74}} & \underline{67.92}\textsuperscript{\textcolor{gain}{+4.37}} \\
\addlinespace[0.6pt]
EOPD & 15.42 & \underline{40.00} & 14.58 & \underline{43.33} & 54.06 & \underline{87.50} & 77.78 & 93.60 & 40.46 & 66.11 \\
\rowcolor{black!6}
\quad w/~SRA & \textbf{17.71} & \textbf{43.33} & \underline{14.79} & \textbf{46.67} & 54.06 & \textbf{90.00} & 77.89 & 93.60 & \underline{41.11}\textsuperscript{\textcolor{gain}{+0.65}} & \textbf{68.40}\textsuperscript{\textcolor{gain}{+2.29}} \\
\midrule
\textit{Qwen3-8B} & 25.63 & 63.33 & 20.42 & 43.33 & 67.50 & 92.50 & 82.75 & 95.40 & 49.07 & 73.64 \\
\bottomrule
\end{tabular}\endgroup
\caption{Main results of distilled Qwen3 models on mathematical reasoning benchmarks. Shaded rows use SRA. Superscripts show changes from the corresponding baseline in percentage points, with gains in green and drops in red. Boldface and underlining indicate the best and second best results among student models of the same size.}
\label{tab:main-results}
\vspace{-3mm}
\end{table*}

\defcitealias{maa_aime}{MAA, 2025}
\defcitealias{maa_amc}{MAA, 2023}

\section{Experiments}
\label{sec:experiments}
Our experiments are designed to address three key questions: whether SRA consistently benefits students trained with diverse distillation methods (Section~\ref{sec:main-results}), whether its gains stem from more than extended training (Section~\ref{sec:continued-distillation}), and how each component contributes to the overall gains (Section~\ref{sec:main-ablations}).

\subsection{Main Results}
\label{sec:main-results}
We start by verifying that our SRA can improve the existing distillation-based alignment methods. 

\paragraph{Setup.} Our experiments are conducted on Qwen3 models~\citep{yang2025qwen3} across 0.6B, 1.7B, and 4B scales. As clarified in Section \ref{sec:two_stage}, our SRA can achieve better performance when student and teacher models are aligned better. Thus we build our SRA upon three mainstream distillation methods, Knowledge Distillation (KD)~\citep{hinton2015distilling,kim2016sequence}, On-Policy Distillation (OPD)~\citep{lu2025onpolicydistillation}, and Entropy-Aware On-Policy Distillation (EOPD)~\citep{jin2026entropy} as baselines.  We also evaluate SRA with the non-thinking mode of a Qwen3 model as the student and its thinking mode as the teacher. Full experimental settings and results for these and the LLaMA~\citep{meta2024llama32} experiments are provided in Appendices~\ref{app:cross-mode} and~\ref{app:LLaMA-results}.

We use mathematical reasoning dataset DAPO-Math-17k~\citep{yu2025dapo} as the training set across all experiments. We first train the student models using KD, OPD, or EOPD for two epochs. Starting from these students, we train SRA with the student parameters frozen for two additional epochs. Detailed configurations for the distillation baselines and SRA are provided in Appendix~\ref{app:backbone-preparation}.

To evaluate the performance, we sample 16 responses per question on mathematical benchmarks  AIME24/25~\citepalias{maa_aime}, AMC23~\citepalias{maa_amc}, and MATH-500~\citep{lightman2024verify}, using temperature=1, top-$p=1$, and a maximum response length of 31,744 tokens. We report Avg@16 and Pass@16. The evaluation prompt is provided in Appendix~\ref{app:prompts}. We also evaluate out-of-domain generalization, with results reported in Appendix~\ref{app:ood-evaluation}.

\paragraph{Results.} Table~\ref{tab:main-results} shows that SRA improves the average Pass@16 in eight of the nine settings, by 2.28 to 4.37 points, and its gains in Pass@16 are clearly larger than those in Avg@16. This asymmetric advantage demonstrates that selective correction curtails compounding errors during autoregressive generation while preserving output diversity, substantially expanding solution exploration. SRA displays remarkable synergy with on-policy distillation methods such as OPD and EOPD, achieving the best  Avg@16 and Pass@16 at 4B and surpassing the 8B teacher on AIME25 Pass@16. Furthermore, Appendix~\ref{app:teacher-alignment} shows that SRA improves alignment with the teacher's distribution over the student's top-$K$ set.

\subsection{Boosting on Distillation Methods}
\label{sec:continued-distillation}
The results above are obtained by continually training on distillation methods, indicating that our SRA effectively boosts these methods. However, a fair comparison requires comparing it with continual distillation by the original alignment methods. Therefore, we continue the distillation methods for another two epochs and compare them with SRA.

\paragraph{Results.} Table~\ref{tab:continued-distillation} summarizes the comparison between SRA and continued distillation on Qwen3-4B. SRA achieves consistent improvements across all baselines, boosting average Avg@16 and Pass@16 by 2.32 and 1.80 points on KD, while outperforming continued distillation on OPD and EOPD by 4.21 and 3.12 points in Pass@16, respectively. Simply extending distillation steps yields diminishing marginal returns, whereas SRA overcomes this performance plateau. 

This advantage stems from fundamental differences in optimization mechanics. Standard distillation relies on a global loss averaged across all generation steps, diluting learning signals at crucial discrepancy points once the student saturates. In contrast, SRA selectively intervenes only at states with high distribution divergence via the Discrepancy Gate, preserving generative diversity elsewhere. By applying localized residual corrections directly to the candidate set, SRA effectively boosts fully converged distillation models without perturbing the global representation space.

\begin{table}[t]
\centering\begingroup
\fontsize{9}{9.5}\selectfont
\setlength{\tabcolsep}{0.8pt}
\renewcommand{\arraystretch}{1.0}
\setlength{\aboverulesep}{1.4pt}
\setlength{\belowrulesep}{1.8pt}
\setlength{\dimen0}{\dimexpr\columnwidth-10\tabcolsep\relax}
\begin{tabular}{@{}>{\raggedright\arraybackslash\leavevmode}p{.204\dimen0}>{\raggedright\arraybackslash\leavevmode}p{.133\dimen0}>{\raggedleft\arraybackslash\leavevmode}p{.153\dimen0}>{\raggedleft\arraybackslash\leavevmode}p{.158\dimen0}>{\raggedleft\arraybackslash\leavevmode}p{.183\dimen0}>{\raggedleft\arraybackslash\leavevmode}p{.169\dimen0}@{}}
\toprule
\multirow{2}{*}{Benchmark} & \multirow{2}{*}{Method} & \multicolumn{2}{c}{Student} & \multicolumn{2}{c}{\cellcolor{black!6}w/~SRA} \\
\cmidrule(lr){3-4}\cmidrule(l){5-6}
 & & Avg@16 & Pass@16 & \cellcolor{black!6}Avg@16 & \cellcolor{black!6}Pass@16 \\
\midrule
\multirow{4}{*}{AIME24} & Base & 3.54 & 26.67 & \cellcolor{black!6}10.42 & \cellcolor{black!6}23.33 \\
 & KD & 12.92 & 30.00 & \cellcolor{black!6}15.00 & \cellcolor{black!6}36.67 \\
 & OPD & 15.83 & 36.67 & \cellcolor{black!6}16.04 & \cellcolor{black!6}\underline{40.00} \\
 & EOPD & \underline{16.67} & \underline{40.00} & \cellcolor{black!6}\textbf{17.71} & \cellcolor{black!6}\textbf{43.33} \\
\midrule
\multirow{4}{*}{AIME25} & Base & 2.71 & 16.67 & \cellcolor{black!6}7.50 & \cellcolor{black!6}26.67 \\
 & KD & \textbf{15.83} & \underline{40.00} & \cellcolor{black!6}13.13 & \cellcolor{black!6}36.67 \\
 & OPD & \underline{15.00} & 36.67 & \cellcolor{black!6}14.38 & \cellcolor{black!6}\textbf{46.67} \\
 & EOPD & 14.58 & \underline{40.00} & \cellcolor{black!6}14.79 & \cellcolor{black!6}\textbf{46.67} \\
\midrule
\multirow{4}{*}{AMC23} & Base & 15.00 & 70.00 & \cellcolor{black!6}39.69 & \cellcolor{black!6}82.50 \\
 & KD & 47.34 & \underline{87.50} & \cellcolor{black!6}52.66 & \cellcolor{black!6}\textbf{90.00} \\
 & OPD & \textbf{56.88} & \underline{87.50} & \cellcolor{black!6}\textbf{56.88} & \cellcolor{black!6}\textbf{90.00} \\
 & EOPD & 53.75 & \underline{87.50} & \cellcolor{black!6}\underline{54.06} & \cellcolor{black!6}\textbf{90.00} \\
\midrule
\multirow{4}{*}{MATH-500} & Base & 20.94 & 88.00 & \cellcolor{black!6}64.84 & \cellcolor{black!6}92.40 \\
 & KD & 71.29 & 93.00 & \cellcolor{black!6}75.91 & \cellcolor{black!6}\underline{94.40} \\
 & OPD & \underline{78.04} & 94.00 & \cellcolor{black!6}\textbf{78.49} & \cellcolor{black!6}\textbf{95.00} \\
 & EOPD & 77.40 & 93.60 & \cellcolor{black!6}77.89 & \cellcolor{black!6}93.60 \\
\midrule
\multirow{4}{*}{Average} & Base & 10.55 & 50.33 & \cellcolor{black!6}30.61\textsuperscript{\textcolor{gain}{+20.06}} & \cellcolor{black!6}56.23\textsuperscript{\textcolor{gain}{+5.90}} \\
 & KD & 36.85 & 62.63 & \cellcolor{black!6}39.17\textsuperscript{\textcolor{gain}{+2.32}} & \cellcolor{black!6}64.43\textsuperscript{\textcolor{gain}{+1.80}} \\
 & OPD & \underline{41.44} & 63.71 & \cellcolor{black!6}\textbf{41.45}\textsuperscript{\textcolor{gain}{+0.01}} & \cellcolor{black!6}\underline{67.92}\textsuperscript{\textcolor{gain}{+4.21}} \\
 & EOPD & 40.60 & 65.28 & \cellcolor{black!6}41.11\textsuperscript{\textcolor{gain}{+0.51}} & \cellcolor{black!6}\textbf{68.40}\textsuperscript{\textcolor{gain}{+3.12}} \\
\bottomrule
\end{tabular}\endgroup
\caption{Comparison of SRA and continued distillation on Qwen3-4B. Student extends distillation by two epochs, shaded applies SRA after two epochs, and Base applies SRA directly to the base model.}
\label{tab:continued-distillation}
\label{tab:continued-opd}
\vspace{-3mm}
\end{table}

\subsection{Ablation Studies}
\label{sec:main-ablations}
\label{sec:component-ablations}
In this section, we conduct ablation studies of SRA to examine the contributions of the student's final hidden state, intermediate hidden state, and residual supervision. Additional experiments on candidate set size $K$, training strategy, and module architecture are provided in Appendix~\ref{app:additional-comparisons}.
\paragraph{Final hidden state.}
We remove the final hidden state from CRA's input while retaining the intermediate hidden state from L24 and the candidate token embeddings. In Table~\ref{tab:component-ablations}, Avg@16 and Pass@16 drop by 0.57 and 1.76 points, respectively, confirming that the final representation driving DG in Section~\ref{sec:dg} provides complementary signals not captured by the other two inputs.
\paragraph{Intermediate hidden state.}
Following the multilayer feature fusion in Section~\ref{sec:cra} inspired by EAGLE~\citep{li2025eagle3}, CRA incorporates an intermediate hidden state. Removing this intermediate state causes the sharpest decline across all components, lowering Avg@16 and Pass@16 by 1.04 and 4.64 points, respectively. When all three inputs are retained, extracting the intermediate representation from L24 consistently outperforms L12 and L32 across both metrics, validating the fusion of final and intermediate representations.
\paragraph{Residual supervision.}
Setting the Smooth L1 loss weight $\lambda_R$ to 0 in Eq.~\eqref{eq:cra_loss} ablates the regression of $f_\theta$ toward the ideal residual in Eq.~\eqref{eq:ideal_residual} while keeping the KL coefficient at 1, which reduces Avg@16 and Pass@16 by 0.92 and 1.87 points. Furthermore, increasing $\lambda_R$ from the default 0.1 to 1 also degrades performance, confirming that balanced residual supervision effectively complements probability distribution matching.

\begin{table}[t]
\centering\begingroup
\fontsize{9}{9.5}\selectfont
\setlength{\tabcolsep}{2pt}
\renewcommand{\arraystretch}{1.0}
\setlength{\dimen0}{\dimexpr\columnwidth-4\tabcolsep\relax}
\begin{tabular}{@{}>{\raggedright\arraybackslash\leavevmode}p{.64\dimen0}*{2}{>{\raggedleft\arraybackslash\leavevmode}p{.18\dimen0}}@{}}
\toprule
Setting & Avg@16 & Pass@16 \\
\midrule
\rowcolor{black!6}
Full SRA (L24, $\lambda_R=0.1$) & \textbf{41.45} & \textbf{67.92} \\
\midrule
\multicolumn{3}{l}{\textit{Final hidden state}} \\
w/o final hidden state & \underline{40.88} & \underline{66.16} \\
\midrule
\multicolumn{3}{l}{\textit{Intermediate hidden state}} \\
w/o intermediate state & 40.41 & 63.28 \\
L12 & 41.03 & 64.60 \\
L32 & \underline{41.25} & \underline{66.21} \\
\midrule
\multicolumn{3}{l}{\textit{Residual supervision}} \\
$\lambda_R=0$ & 40.53 & \underline{66.05} \\
$\lambda_R=1$ & \underline{41.05} & 65.28 \\
\bottomrule
\end{tabular}\endgroup
\caption{Ablations on Qwen3-4B OPD across four mathematical reasoning benchmarks against SRA. Full results and configurations are in Appendix~\ref{app:component-ablations}.}
\label{tab:component-ablations}
\vspace{-3mm}
\end{table}

\section{Conclusion}
In this work, we present Selective Residual Adaptation (SRA), a plug-and-play framework that boosts distilled student reasoning without backbone updates or inference-time teacher queries. Exploiting the sparsity of teacher-student divergence, SRA selectively calibrates candidate token distributions via a Discrepancy Gate and a Candidate Residual Adapter. Supported by rigorous theoretical analysis and extensive experiments, we show that SRA consistently strengthens student policies and preserves general capabilities.

\section*{Limitations}
While SRA achieves consistent reasoning gains, several notable limitations remain. First, its formulation relies on the assumption that teacher-preferred tokens reside within the student top-$K$ candidate set, whose coverage may decrease in unconstrained open-domain generation. Second, although out-of-domain generalization has been validated on coding and multi-disciplinary benchmarks, our evaluation primarily targets structured reasoning tasks, leaving open-ended generation such as free-form dialogue for our future investigation. Finally, although SRA eliminates runtime teacher queries during inference, its two stage training framework requires dynamic on-policy rollouts and paired teacher forward passes, incurring additional training compute overhead.

\bibliography{custom}

\clearpage

\appendix

\section{Empirical Motivation Experiments}
\label{app:empirical-motivation}

We select 256 questions from the GSM8K test split~\citep{cobbe2021gsm8k} and compare Qwen3-0.6B-Base and Qwen3-1.7B-Base with their OPD counterparts~\citep{yang2025qwen3}. Qwen3-8B serves as the teacher with thinking mode disabled.

\subsection{Training setup}
\label{app:opd-checkpoints}
We obtain the OPD students by training the Base checkpoints for two epochs on DAPO-Math-17k~\citep{yu2025dapo}. We adopt the OPD baseline configuration~\citep{jin2026entropy}, with a maximum response length of 7168 tokens and the training prompt below. We optimize with AdamW~\citep{loshchilov2019adamw}. Table~\ref{tab:opd-hparams} summarizes the hyperparameters.

\begin{table*}[ht]
\centering
\begingroup
\fontsize{9}{10}\selectfont
\setlength{\tabcolsep}{3pt}
\renewcommand{\arraystretch}{1.05}

\begin{tabular*}{\textwidth}{@{\extracolsep{\fill}}lcccc@{}}
\toprule
Hyperparameter & KD & OPD & EOPD & SRA \\
\midrule
\multicolumn{5}{l}{\textit{Training configuration}} \\
Optimizer
& AdamW & AdamW & AdamW & AdamW \\
Learning rate
& $10^{-5}$ & $3\times10^{-6}$ & $3\times10^{-6}$
& DG: $10^{-4}$, CRA: $5\times10^{-5}$ \\
Learning rate schedule
& Cosine & Cosine & Cosine & Constant \\
Training epochs
& 2 & 2 & 2 & 2 \\
Batch size
& 128 & 128 & 128 & 128 \\
Minibatch size
& 32 & 32 & 32 & 32 \\
Responses per prompt
& 1 & 1 & 1 & 1 \\
Maximum prompt tokens
& 1,024 & 1,024 & 1,024 & 1,024 \\
Maximum response tokens
& 7,168 & 7,168 & 7,168 & 7,168 \\
Temperature / top-$p$
& 1 / 1 & 1 / 1 & 1 / 1 & 1 / 1 \\
Top-$k$ / candidate size
& 16 & 16 & 16 & 16 \\

\midrule
\multicolumn{5}{l}{\textit{Objectives and method-specific settings}} \\
CE coefficient
& 0.5 & -- & -- & -- \\
Forward KL coefficient
& 0.5 & -- & 1.0 & 1 \\
Teacher entropy threshold
& -- & -- & 0.8 & -- \\
TV threshold $\tau$
& -- & -- & -- & 0.1 \\
Smooth L1 coefficient $\lambda_R$
& -- & -- & -- & 0.1 \\
Smooth L1 $\beta$
& -- & -- & -- & 1 \\
Residual target clip
& -- & -- & -- & $[-12,12]$ \\
CRA Transformer layers
& -- & -- & -- & 2 \\
DG optimization passes per rollout batch (epoch 1 / 2)
& -- & -- & -- & 2 / 0 \\
CRA optimization passes per rollout batch (epoch 1 / 2)
& -- & -- & -- & 1 / 1 \\
\bottomrule
\end{tabular*}

\endgroup
\caption{Training hyperparameters for KD, OPD, EOPD, and SRA.}
\label{tab:training-settings}
\label{tab:kd-settings}
\label{tab:onpolicy-settings}
\label{tab:sra-settings}
\label{tab:opd-hparams}
\end{table*}

The training prompt is given below, with \texttt{\{problem\}} replaced by the input question.

\begin{sratrainingprompt}
Solve the following math problem step by step.
The last line of your response should be of the form
Answer: \texttt{\$Answer} (without quotes) where
\texttt{\$Answer} is the answer to the problem.
\par\medskip
\texttt{\{problem\}}
\par\medskip
Remember to put your answer on its own line after
\texttt{"Answer:"}.
\end{sratrainingprompt}

\subsection{Evaluation Protocol}
\label{app:motivation-details}
\label{app:motivation-settings}
\label{app:prompts}

We randomly sample 256 distinct questions from the GSM8K test split~\citep{cobbe2021gsm8k}. This question set is shared across all conditions and is not used for OPD training. For each question and condition, we sample one response with temperature 1.0, top-$p=1.0$, and a maximum of 512 new tokens. All models are evaluated in non-thinking mode using the following prompt:

\begin{sraevaluationprompt}
\texttt{\{problem\}} Please reason step by step, and put your final answer within \texttt{\textbackslash boxed\{\}}.
\end{sraevaluationprompt}

Answer accuracy is computed by comparing the final boxed answer with the reference answer. Responses without a boxed answer are scored as incorrect. All training and evaluation runs are conducted on a single machine equipped with 8 NVIDIA A100 GPUs.

\subsection{TV Threshold Sensitivity}
\label{app:threshold-sensitivity}

We vary $\tau$ from 0.06 to 0.12 in increments of 0.01, holding $K=16$ and the original student responses fixed. Figure~\ref{fig:tv-threshold} reports disagreement recall and position selection rate.

At $\tau=0.1$, recall exceeds 98\% in all four settings, while the selection rate is 3.37--6.82 percentage points lower than at $\tau=0.06$. The 0.6B OPD student is particularly sensitive to further increases in the threshold. Its recall decreases from 98.03\% at $\tau=0.1$ to 97.21\% and 96.39\% at $\tau=0.11$ and $\tau=0.12$, respectively, while the selection rate decreases by an additional 1.77 and 3.59 percentage points. Thus, higher thresholds reduce the number of selected positions but miss more disagreements. Among the evaluated thresholds, $\tau=0.1$ is the largest value that maintains recall above 98\% across all four settings, supporting its use as a practical threshold for retaining high disagreement recall while limiting position selection.

\begin{figure*}[ht]
\centering
\includegraphics[width=\textwidth]{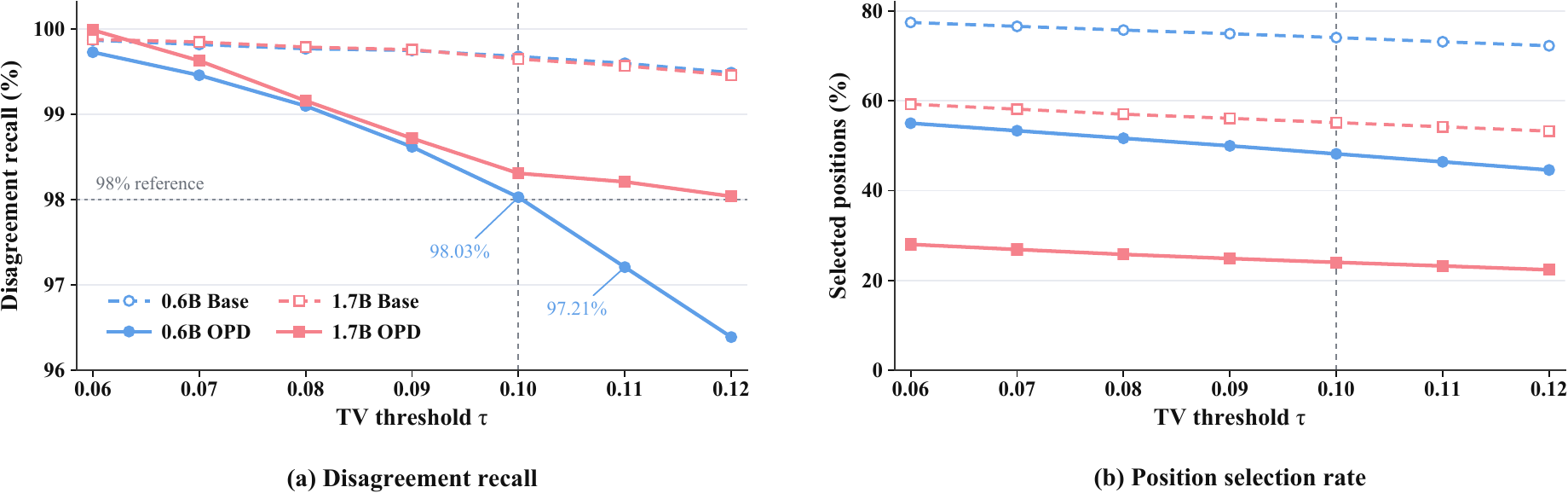}
\caption{TV threshold sensitivity on GSM8K with $K=16$. (a) Disagreement recall. (b) Proportion of selected positions. Vertical lines mark $\tau=0.1$.}
\label{fig:tv-threshold}
\end{figure*}
\section{Theoretical Analysis}
\label{app:sra-theory}
\label{app:proofs}

\subsection{Local Sensitivity of TV and KL}
\label{app:tv-local-sensitivity}

Fix a generation prefix $s_t$ and the student's candidate set $\mathcal K(s_t)=\{y_{t,1},\ldots,y_{t,K}\}$. Let $q_T=(q_{T,1},\ldots,q_{T,K})$ denote the teacher distribution normalized on this set, where $q_{T,i}=q_T(y_{t,i}\mid s_t)$. We keep this candidate set fixed throughout the local analysis and suppress $s_t$ below.

To describe student distributions near $q_T$, consider
\begin{equation}
q_S^{(\varepsilon)}=q_T+\varepsilon\Delta,
\label{eq:app-local-perturbation}
\end{equation}
where $\Delta=(\Delta_1,\ldots,\Delta_K)$ is a fixed nonzero perturbation vector. A positive $\Delta_i$ increases the student's probability of candidate $i$ relative to the teacher, and a negative $\Delta_i$ decreases it. The scalar $\varepsilon>0$ scales these changes. The condition $\sum_i\Delta_i=0$ preserves the total probability mass.

\begin{theorem*}[Theorem~1, Local sensitivity of TV and KL]
\label{thm:tv-local-response-app}
Let $q_T$ be a fixed probability distribution on $K$ candidates with $q_{T,i}>0$. Let $\Delta\in\mathbb R^K\setminus\{0\}$ be fixed and satisfy $\sum_i\Delta_i=0$. For sufficiently small $\varepsilon>0$, the vector $q_S^{(\varepsilon)}=q_T+\varepsilon\Delta$ is a probability distribution and satisfies
\begin{equation}
\operatorname{TV}(q_T, q_S^{(\varepsilon)})
=\frac{\varepsilon}{2}\|\Delta\|_1.
\label{eq:app-tv-local-expansion}
\end{equation}
Using natural logarithms, the two KL directions satisfy
\begin{equation}
D_{\mathrm{KL}}(q_T\|q_S^{(\varepsilon)})
=\frac{\varepsilon^2}{2}\sum_{i=1}^K\frac{\Delta_i^2}{q_{T,i}}
+O(\varepsilon^3),
\label{eq:app-fkl-local-expansion}
\end{equation}
and
\begin{equation}
D_{\mathrm{KL}}(q_S^{(\varepsilon)}\|q_T)
=\frac{\varepsilon^2}{2}\sum_{i=1}^K\frac{\Delta_i^2}{q_{T,i}}
+O(\varepsilon^3).
\label{eq:app-rkl-local-expansion}
\end{equation}
Thus, along the same perturbation direction, TV has a first-order response and both KL directions have second-order responses.
\end{theorem*}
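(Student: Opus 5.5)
First we check that $q_S^{(\varepsilon)}$ is a probability distribution. Since $\sum_i\Delta_i=0$, the entries of $q_T+\varepsilon\Delta$ sum to one. Since $m:=\min_i q_{T,i}>0$, every entry stays strictly positive once $\varepsilon<m/\|\Delta\|_\infty$. We restrict to $\varepsilon<\varepsilon_0:=m/(2\|\Delta\|_\infty)$. Then $|\varepsilon\Delta_i/q_{T,i}|\le 1/2$ for all $i$, which also gives a uniform radius for the logarithmic expansions used below.

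The TV identity is exact and immediate. By definition,
\[
\operatorname{TV}(q_T,q_S^{(\varepsilon)})=\tfrac12\sum_i|q_{S,i}^{(\varepsilon)}-q_{T,i}|=\tfrac12\sum_i\varepsilon|\Delta_i|=\tfrac{\varepsilon}{2}\|\Delta\|_1 ,
\]
using only $\varepsilon>0$. No smallness is needed here beyond keeping $q_S^{(\varepsilon)}$ a distribution.

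For the forward KL, write $x_i=\varepsilon\Delta_i/q_{T,i}$, so that
\[
D_{\mathrm{KL}}(q_T\|q_S^{(\varepsilon)})=-\sum_i q_{T,i}\log(1+x_i).
\]
Apply Taylor's theorem with remainder: $\log(1+x)=x-x^2/2+R(x)$, where $|R(x)|\le C|x|^3$ for $|x|\le 1/2$ (for instance $C=8/3$ from the Lagrange form). The linear term gives $-\sum_i q_{T,i}x_i=-\varepsilon\sum_i\Delta_i=0$. The quadratic term gives $\frac{\varepsilon^2}{2}\sum_i\Delta_i^2/q_{T,i}$. The remainder is bounded by $C\varepsilon^3\sum_i|\Delta_i|^3/q_{T,i}^2$, which is $O(\varepsilon^3)$ with a constant depending only on $q_T$ and $\Delta$.

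For the reverse KL, use the same $x_i$:
\[
D_{\mathrm{KL}}(q_S^{(\varepsilon)}\|q_T)=\sum_i q_{T,i}(1+x_i)\log(1+x_i).
\]
Expand $(1+x)\log(1+x)=x+x^2/2+\tilde R(x)$, where $|\tilde R(x)|\le \tilde C|x|^3$ on $|x|\le1/2$. This follows by multiplying the previous expansion by $1+x$ and absorbing the $x\cdot R(x)$ and $-x^3/2$ pieces into the cubic bound. Again the linear term vanishes because $\sum_i\Delta_i=0$. The quadratic term equals $\frac{\varepsilon^2}{2}\sum_i\Delta_i^2/q_{T,i}$, matching the forward direction, and the remainder is $O(\varepsilon^3)$.

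The only step needing care is making the $O(\varepsilon^3)$ claims rigorous. This means choosing $\varepsilon_0$ so every $|x_i|\le1/2$ and stating explicit cubic remainder bounds there, rather than invoking formal series. Finally, since $\Delta\neq0$, the coefficient $\frac12\sum_i\Delta_i^2/q_{T,i}$ is strictly positive. Hence TV responds at first order while both KL directions respond exactly at second order.
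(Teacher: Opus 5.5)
Your proposal is correct and follows essentially the same route as the paper: the positivity and normalization check, the exact TV computation, the substitution $x_i=\varepsilon\Delta_i/q_{T,i}$ with $|x_i|\le 1/2$, and second-order expansions of $\log(1+x)$ and $(1+x)\log(1+x)$ whose linear terms vanish because $\sum_i\Delta_i=0$. Your Lagrange-form cubic remainder bounds (e.g.\ $C=8/3$) make explicit the $O(\varepsilon^3)$ claim, which the paper justifies only by noting that the candidate set is finite and $q_T,\Delta$ are fixed.
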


\begin{proof}
Choose
\begin{equation}
0<\varepsilon<\frac12\min_{i:\Delta_i\ne0}
\frac{q_{T,i}}{|\Delta_i|}.
\end{equation}
The upper bound is strictly positive because $\Delta\ne0$ and each $q_{T,i}>0$. Every coordinate of $q_S^{(\varepsilon)}$ is then positive. Moreover, $\sum_i\Delta_i=0$ ensures that the coordinates sum to one.
The definition of TV gives:
\begin{equation}
\begin{aligned}
\operatorname{TV}(q_T, q_S^{(\varepsilon)})
&=\frac12\sum_{i=1}^K
|q_{T,i}+\varepsilon\Delta_i-q_{T,i}|\\
&=\frac{\varepsilon}{2}\|\Delta\|_1.
\end{aligned}  
\end{equation}
Set $u_i=\varepsilon\Delta_i/q_{T,i}$, so $|u_i|<1/2$. Using
\begin{equation}
\log(1+u)=u-\frac12u^2+O(u^3),
\end{equation}
the forward KL becomes
\begin{equation}
\begin{aligned}
D_{\mathrm{KL}}(q_T\|q_S^{(\varepsilon)})
&=-\sum_iq_{T,i}\log(1+u_i)\\
&=-\varepsilon\sum_i\Delta_i +\frac{\varepsilon^2}{2}\sum_i\frac{\Delta_i^2}{q_{T,i}}
+O(\varepsilon^3)\\
&=\frac{\varepsilon^2}{2}\sum_i\frac{\Delta_i^2}{q_{T,i}}
+O(\varepsilon^3).
\end{aligned}    
\end{equation}
For the reverse direction, the expansion
\begin{equation}
(1+u)\log(1+u)=u+\frac12u^2+O(u^3)
\end{equation}
yields
\begin{equation}
\begin{aligned}
D_{\mathrm{KL}}(q_S^{(\varepsilon)}\|q_T)
&=\sum_iq_{T,i}(1+u_i)\log(1+u_i)\\
&=\varepsilon\sum_i\Delta_i +\frac{\varepsilon^2}{2}\sum_i\frac{\Delta_i^2}{q_{T,i}}
+O(\varepsilon^3)\\
&=\frac{\varepsilon^2}{2}\sum_i\frac{\Delta_i^2}{q_{T,i}}
+O(\varepsilon^3).
\end{aligned}
\end{equation}
The remainder is $O(\varepsilon^3)$ because the candidate set is finite and $q_T,\Delta$ are fixed. Since $\Delta\ne0$, both $\|\Delta\|_1$ and $\sum_i\Delta_i^2/q_{T,i}$ are strictly positive. Therefore, the leading nonzero term is first order for TV and second order for both KL directions.
\end{proof}

\subsection{Setup and Approximation Error}
\label{app:sra-error-definition}
\label{app:local-policy-error}

All policies share a question distribution $\mathcal D$, a finite output vocabulary $\mathcal V$, and a maximum generation horizon $H$. For a terminal reward $R(x,y)\in[0,1]$, define
\begin{equation}
J(\pi)=\mathbb E_{x\sim\mathcal D,\,y\sim\pi(\cdot\mid x)}[R(x,y)].
\label{eq:expected_reward}
\end{equation}
After termination, all policies take the same absorbing action, and gate activation and error contributions are set to zero. The notation $\mathbb E_\pi$ includes both question sampling and trajectories generated by $\pi$.

At an active prefix $s_t=(x,y_{<t})$, let $\mathcal K(s_t)$ be the student's top-$K$ candidate set. For $M\in\{S,T\}$, define the retained probability mass
\begin{equation}
m_M(s_t)=\sum_{a\in\mathcal K(s_t)}\pi_M(a\mid s_t)
\label{eq:retained_mass}
\end{equation}
and the distribution conditioned on this same set
\begin{equation}
q_M(a\mid s_t)=\frac{\pi_M(a\mid s_t)}{m_M(s_t)},
\qquad a\in\mathcal K(s_t).
\label{eq:app-candidate-conditional}
\end{equation}
Candidate probabilities are strictly positive, as with finite softmax logits. The gate decision is $g_\phi(s_t)\in\{0,1\}$, and CRA produces the corrected probabilities $q_\theta(\cdot\mid s_t)$ on $\mathcal K(s_t)$. A bar denotes extension by zero outside this set. 

Writing $\pi=\pi_{\phi,\theta}$, the deployed policy is
\begin{equation}
\pi(\cdot\mid s_t)=
\begin{cases}
\pi_S(\cdot\mid s_t),&g_\phi(s_t)=0,\\
\bar q_\theta(\cdot\mid s_t),&g_\phi(s_t)=1.
\end{cases}
\label{eq:app-policy}
\end{equation}

Define the candidate set error term
\begin{equation}
\begin{aligned}
&\epsilon_K(s_t)= g_\phi(s_t)(1-m_T(s_t)) \\
&+ (1-g_\phi(s_t)) \cdot\max\{1-m_S(s_t),1-m_T(s_t)\}.
\end{aligned}
\label{eq:sra_truncation_error}
\end{equation}
The total approximation error used in Section~\ref{sec:theory} is
\begin{equation}
\begin{aligned}
\mathcal E_{\mathrm{SRA}}
:={}&\mathbb E_{\pi_{\phi,\theta}}
\sum_{t=1}^{H}\Bigl[\epsilon_K(s_t)\\
&+(1-g_\phi(s_t))\operatorname{TV}(q_S,q_T)\\
&+g_\phi(s_t)\operatorname{TV}(q_T,q_\theta)\Bigr].
\end{aligned}
\label{eq:sra-total-approximation-error}
\end{equation}
All distributions inside the sum are evaluated at $s_t$. The first term accounts for probability mass outside the candidate set, the second for discrepancies left at unselected positions, and the third for the remaining error after correction. The following lemma connects their sum to the deployed policy's distance from the teacher.

\begin{lemma}[Local policy error]
\label{lem:sra-local-error}
At every active prefix $s_t$, the policy in Equation~\eqref{eq:app-policy} satisfies
\begin{equation}
\begin{aligned}
\operatorname{TV}(\pi_T, \pi)
\leq{}&\epsilon_K+(1-g_\phi)\operatorname{TV}(q_T,q_S)\\
&+g_\phi\operatorname{TV}(q_T,q_\theta),
\end{aligned}
\label{eq:local-policy-error}
\end{equation}
where state arguments are suppressed for readability.
\end{lemma}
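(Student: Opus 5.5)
We split on the gate value $g_\phi(s_t)\in\{0,1\}$, since the right-hand side of Equation~\eqref{eq:local-policy-error} collapses to a single case for each value. In each case we insert the extended conditional distribution $\bar q_T$ (teacher renormalized on $\mathcal K(s_t)$, extended by zero) as an intermediate point. We then apply the triangle inequality for TV on the full vocabulary $\mathcal V$. The recurring ingredient is a truncation identity.

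The identity is $\operatorname{TV}(\pi_M,\bar q_M)=1-m_M$ for $M\in\{S,T\}$. To see it, write $\pi_M(a)=m_M q_M(a)$ on $\mathcal K$. Then
\[
\sum_{a\in\mathcal K}|\pi_M(a)-q_M(a)|=(1-m_M)\sum_{a\in\mathcal K}q_M(a)=1-m_M,
\]
and the mass outside $\mathcal K$ contributes another $1-m_M$. Halving the total gives $1-m_M$. We also note that TV between two distributions supported on $\mathcal K$ equals the TV of their restrictions. Hence $\operatorname{TV}(\bar q_T,\bar q_\theta)=\operatorname{TV}(q_T,q_\theta)$, and likewise for $q_S$.

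\emph{Case $g_\phi=1$.} Here $\pi=\bar q_\theta$, so
\[
\operatorname{TV}(\pi_T,\bar q_\theta)\le \operatorname{TV}(\pi_T,\bar q_T)+\operatorname{TV}(\bar q_T,\bar q_\theta)=(1-m_T)+\operatorname{TV}(q_T,q_\theta),
\]
which is exactly $\epsilon_K+\operatorname{TV}(q_T,q_\theta)$ in this case.

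\emph{Case $g_\phi=0$.} Here $\pi=\pi_S$. The naive chain $\pi_T\to\bar q_T\to\bar q_S\to\pi_S$ gives $(1-m_T)+\operatorname{TV}(q_T,q_S)+(1-m_S)$. That is a sum of the two truncation errors, whereas the lemma only allows their maximum, $\max\{1-m_S,1-m_T\}$. This is the main obstacle, and the triangle inequality alone will not close it; we need a direct computation.

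Write $\alpha=1-m_T$, $\beta=1-m_S$, and assume w.l.o.g.\ $\alpha\ge\beta$ (the other case is symmetric). We split $2\operatorname{TV}(\pi_T,\pi_S)$ into the part outside $\mathcal K$ and the part on $\mathcal K$.

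\emph{Outside $\mathcal K$.} Each pointwise difference is at most the larger of the two values, and both are nonnegative, so the outside sum is at most $\alpha+\beta$.

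\emph{On $\mathcal K$.} We write $\pi_T-\pi_S=m_T(q_T-q_S)+(m_T-m_S)q_S$. This gives
\[
\sum_{\mathcal K}|\pi_T-\pi_S|\le m_T\cdot 2\operatorname{TV}(q_T,q_S)+|m_T-m_S|=2m_T\operatorname{TV}(q_T,q_S)+(\alpha-\beta).
\]

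\emph{Combining.} The two parts give
\[
2\operatorname{TV}(\pi_T,\pi_S)\le 2\alpha+2m_T\operatorname{TV}(q_T,q_S).
\]
Dividing by two, $\operatorname{TV}(\pi_T,\pi_S)\le\max\{\alpha,\beta\}+\operatorname{TV}(q_T,q_S)$, using $m_T\le 1$. When $\beta>\alpha$ we repeat the decomposition with the roles of $S$ and $T$ swapped: write $\pi_T-\pi_S=m_S(q_T-q_S)+(m_T-m_S)q_T$, which yields the bound with $\beta$ in place of $\alpha$. This gives the claimed inequality in the $g_\phi=0$ case and completes the proof. The only delicate point is choosing the decomposition anchored at the distribution with the larger truncated mass. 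If that step failed, the fallback is the weaker sum bound, which would require redefining $\epsilon_K$.
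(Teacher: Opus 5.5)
Your proof is correct, but it takes a different route from the paper in both cases.

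For $g_\phi=0$, the paper does not split the $\ell_1$ sum. It uses the overlap identity $1-\operatorname{TV}(\pi_T,\pi_S)=\sum_{a\in\mathcal V}\min\{\pi_S(a),\pi_T(a)\}$, discards the mass outside $\mathcal K$, and factors out $\min\{m_S,m_T\}$. This gives $\operatorname{TV}(\pi_T,\pi_S)\le 1-\min\{m_S,m_T\}+\min\{m_S,m_T\}\operatorname{TV}(q_T,q_S)$ in one symmetric line, with no case distinction. For $g_\phi=1$, the paper writes $\pi_T=m_T\bar q_T+(1-m_T)\pi_T^{\mathrm{out}}$ and applies convexity of TV. That yields the slightly sharper $m_T\operatorname{TV}(q_T,q_\theta)+(1-m_T)$, where your route goes through $\bar q_T$ with the triangle inequality and the truncation identity.

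Your inside/outside computation is more hands-on, and it buys transparency. It shows exactly why the naive chain $\pi_T\to\bar q_T\to\bar q_S\to\pi_S$ loses: it pays $1-m_T$ and $1-m_S$ separately. It also shows how that loss is recovered. With your anchor, it reproduces the paper's intermediate bound exactly: for $m_T\le m_S$ you get $(1-m_T)+m_T\operatorname{TV}(q_T,q_S)$.

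One correction to your commentary: the anchor choice is not delicate. With either decomposition, the on-$\mathcal K$ part is at most $2m\operatorname{TV}(q_T,q_S)+|\alpha-\beta|$ with $m\in\{m_S,m_T\}$. Since $\alpha+\beta+|\alpha-\beta|=2\max\{\alpha,\beta\}$, a single decomposition handles both cases. The WLOG split and the fallback remark about redefining $\epsilon_K$ are therefore unnecessary.
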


\begin{proof}
Fix a prefix and write $v=\operatorname{TV}(q_T,q_S)$ and $\mathcal K=\mathcal K(s_t)$. When $g_\phi=0$, the deployed distribution is $\pi_S$. Set $\alpha=\min\{m_S,m_T\}$. The overlap identity for TV gives
\begin{equation}
\begin{aligned}
1-\operatorname{TV}(\pi_T,\pi_S)
&=\sum_{a\in\mathcal V}\min\{\pi_S(a),\pi_T(a)\}\\
&\geq\alpha\sum_{a\in\mathcal K}\min\{q_S(a),q_T(a)\}\\
&=\alpha(1-v).
\end{aligned}
\end{equation}
Consequently,
\begin{equation}
\begin{aligned}
\operatorname{TV}(\pi_T,\pi_S)
&\leq1-\alpha+\alpha v\\
&\leq\max\{1-m_S,1-m_T\}+v.
\end{aligned}
\label{eq:student-support-bound}
\end{equation}

When $g_\phi=1$, the deployed distribution is $\bar q_\theta$. If $m_T<1$, decompose the teacher as
\begin{equation}
\pi_T=m_T\bar q_T+(1-m_T)\pi_T^{\mathrm{out}},
\end{equation}
where $\pi_T^{\mathrm{out}}$ is the teacher distribution conditioned on $\mathcal V\setminus\mathcal K$. Convexity of TV yields
\begin{equation}
\begin{aligned}
\operatorname{TV}(\pi_T, \bar q_\theta)
&\leq m_T\operatorname{TV}(q_T, q_\theta)+(1-m_T)\\
&\leq\operatorname{TV}(q_T, q_\theta)+(1-m_T).
\end{aligned}
\label{eq:corrected-support-bound}
\end{equation}
For $m_T=1$, the same bound follows from $\pi_T=\bar q_T$. Combining the two gate decisions proves the lemma.
\end{proof}

\subsection{Performance Improvement Guarantee}
\label{app:return-bound}

We now give the proof of Theorem~\ref{thm:sra_improvement}.

\begin{theorem*}[Theorem~2, Performance under selective correction]
\label{thm:sra-improvement-formal}
Under the setup in Appendix~\ref{app:sra-error-definition}, let $\mathcal E_{\mathrm{SRA}}$ be defined by Equation~\eqref{eq:sra-total-approximation-error}. Then
\begin{equation}
J(\pi_{\phi,\theta})\geq J(\pi_T)-\mathcal E_{\mathrm{SRA}},
\label{eq:app-teacher-performance-bound}
\end{equation}
and hence
\begin{equation}
J(\pi_{\phi,\theta})-J(\pi_S)
\geq J(\pi_T)-J(\pi_S) -\mathcal E_{\mathrm{SRA}}.
\label{eq:app-sra-improvement-bound}
\end{equation}
In particular, $\mathcal E_{\mathrm{SRA}}<J(\pi_T)-J(\pi_S)$ implies $J(\pi_{\phi,\theta})>J(\pi_S)$.
\end{theorem*}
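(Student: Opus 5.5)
The plan is a standard performance-difference (simulation-lemma) argument: compare the trajectory distributions of $\pi_T$ and $\pi=\pi_{\phi,\theta}$ through a hybrid-policy telescoping, and bound each per-step discrepancy with Lemma~\ref{lem:sra-local-error}. Since $R\in[0,1]$, for any two policies we have $|J(\pi_T)-J(\pi)|\le \operatorname{TV}(P_T,P_\pi)$, where $P_\cdot$ denotes the joint law of $(x,y)$. So it suffices to show
\begin{equation*}
\operatorname{TV}(P_T,P_\pi)\le \mathbb E_{\pi}\sum_{t=1}^H \operatorname{TV}\bigl(\pi_T(\cdot\mid s_t),\pi(\cdot\mid s_t)\bigr).
\end{equation*}
The second inequality \eqref{eq:app-sra-improvement-bound} then follows by subtracting $J(\pi_S)$ from both sides. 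The strict-improvement corollary is immediate.

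To prove the trajectory bound, define hybrid policies $\pi^{(k)}$ that follow $\pi$ for steps $t\le k$ and $\pi_T$ afterwards. Then $\pi^{(0)}=\pi_T$ and $\pi^{(H)}=\pi$. By the triangle inequality,
\begin{equation*}
\operatorname{TV}(P_T,P_\pi)\le\sum_{k=1}^H \operatorname{TV}(P_{\pi^{(k-1)}},P_{\pi^{(k)}}).
\end{equation*}
The two hybrids agree on the prefix distribution up to step $k-1$, which is generated by $\pi$. They also use the same continuation kernel $\pi_T$ after step $k$. They differ only in the kernel applied at step $k$. Applying a common Markov kernel cannot increase TV (data processing), so the $k$-th term equals $\mathbb E_{s_k\sim\pi}\operatorname{TV}(\pi(\cdot\mid s_k),\pi_T(\cdot\mid s_k))$. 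Terminated prefixes contribute zero because of the absorbing-action convention.

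Lemma~\ref{lem:sra-local-error} bounds each per-step TV by $\epsilon_K+(1-g_\phi)\operatorname{TV}(q_T,q_S)+g_\phi\operatorname{TV}(q_T,q_\theta)$. Summing over $t$ and taking $\mathbb E_\pi$ gives exactly $\mathcal E_{\mathrm{SRA}}$ as defined in \eqref{eq:sra-total-approximation-error}. Two details need checking: the roles of the arguments in $\operatorname{TV}(q_S,q_T)$ are symmetric, and the expectation in \eqref{eq:sra-total-approximation-error} is taken under $\pi_{\phi,\theta}$, which matches the prefix law in the hybrid telescoping.

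The main point needing care is the hybrid step. The prefix distribution must be the one induced by the deployed policy $\pi$, not by $\pi_T$, so that the error expectation is under $\mathbb E_{\pi_{\phi,\theta}}$. This forces the ordering of the hybrids: $\pi$ first, teacher afterwards. I would also verify that the reward bound uses only $R\in[0,1]$ and the variable-length/absorbing convention, so that everything lives on a common finite space of length-$H$ sequences.
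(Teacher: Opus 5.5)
Your proof is correct and is essentially the paper's argument. With $\pi^{(k)}$ following $\pi_{\phi,\theta}$ up to step $k$ and the teacher afterwards, $J(\pi^{(k)})=\mathbb E_{\pi}V_{k+1}(s_{k+1})$, so your $k$-th hybrid gap is exactly the paper's $k$-th telescoping term $\mathbb E_\pi\bigl[\mathbb E_{a\sim\pi}Q_k(s_k,a)-\mathbb E_{a\sim\pi_T}Q_k(s_k,a)\bigr]$ built from the teacher value function. The only difference is where boundedness enters: you telescope trajectory laws under TV and use $R\in[0,1]$ once at the end, which gives the slightly stronger intermediate bound $\operatorname{TV}(P_T,P_\pi)\le\mathcal E_{\mathrm{SRA}}$ covering every $[0,1]$-valued trajectory functional, whereas the paper telescopes values and applies the bounded-function inequality to each $Q_t(s_t,\cdot)\in[0,1]$.
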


\begin{proof}
Write $\pi=\pi_{\phi,\theta}$. Let $V_t(s_t)$ be the expected terminal reward when continuing with the teacher from state $s_t$. For a possible next token $a$, let $(s_t, a)$ be the next state and define
\begin{equation}
Q_t(s_t,a)=V_{t+1}(s_{t+1}).
\end{equation}
The teacher value satisfies
\begin{equation}
V_t(s_t)=\mathbb E_{a\sim\pi_T(\cdot\mid s_t)}Q_t(s_t,a).
\end{equation}
The append operation leaves an absorbing state unchanged. We have $V_{H + 1}=R$, so all values $V_t$ and $Q_t$ lie in $[0,1]$.

Telescoping along trajectories generated by $\pi$, and using the shared question distribution, gives
\begin{equation}
\begin{aligned}
J(\pi)-J(\pi_T)
={}&\sum_{t=1}^{H}\mathbb E_\pi
[V_{t+1}(s_{t+1})-V_t(s_t)]\\
={}&\sum_{t=1}^{H}\mathbb E_\pi\Bigl[
\mathbb E_{a\sim\pi(\cdot\mid s_t)}Q_t(s_t,a)\\
&\qquad-\mathbb E_{a\sim\pi_T(\cdot\mid s_t)}Q_t(s_t,a)\Bigr].
\end{aligned}
\label{eq:teacher-telescope}
\end{equation}
For distributions $P,Q$ and any $f$ taking values in $[0,1]$,
\begin{equation*}
\mathbb E_P[f]-\mathbb E_Q[f]\geq-\operatorname{TV}(P,Q).
\end{equation*}
Applying this inequality to each $Q_t(s_t,\cdot)$ yields
\begin{equation*}
\begin{aligned}
J(\pi)-J(\pi_T)
\geq-\mathbb E_\pi\sum_{t=1}^{H}
\operatorname{TV}(\pi,\pi_T)(s_t).
\end{aligned}
\end{equation*}
Lemma~\ref{lem:sra-local-error} and Equation~\eqref{eq:sra-total-approximation-error} bound the right-hand side below by $-\mathcal E_{\mathrm{SRA}}$. Rearranging proves Equation~\eqref{eq:app-teacher-performance-bound}. Subtracting $J(\pi_S)$ from both sides proves Equation~\eqref{eq:app-sra-improvement-bound} and the stated sufficient condition for improvement.
\end{proof}

\begin{algorithm}[ht]
\caption{SRA training}
\label{alg:sra-training}
\small
\begin{algorithmic}[1]
\Require Dataset $\mathcal D$, frozen student $\pi_S$ and teacher $\pi_T$,
candidate size $K$, TV threshold $\tau$, gate threshold $\eta$
\Ensure Trained DG parameters $\phi$ and CRA parameters $\theta$
\State Initialize $\phi$ and $\theta$

\For{each minibatch $\mathcal B$ in $\mathcal D$} \Comment{\textbf{Stage 1: Cold Start.}}
    \State $\mathcal Y \gets \operatorname{Rollout}(\pi_S,\mathcal B)$
    \State Collect response prefixes $\mathcal S$ from $(\mathcal B,\mathcal Y)$
    \ForAll{$s\in\mathcal S$}
        \State Extract student features and top-$K$ set $\mathcal K(s)$
        \State Compute $q_S,q_T$ using Equation~\eqref{eq:normalized}
        \State $\ell(s)\gets\mathbf 1[\mathrm{TV}(q_T, q_S)\geq\tau]$
    \EndFor
    \State $\phi\gets\operatorname{Update}\!\left(
    \phi,\frac{1}{|\mathcal S|}\sum_{s\in\mathcal S}
    \mathcal L_{\mathrm{DG}}(s)\right)$
    \State $\mathcal S^+\gets\{s\in\mathcal S:\ell(s)=1\}$
    \If{$\mathcal S^+\neq\varnothing$}
        \State $\theta\gets\operatorname{Update}\!\left(
        \theta,\frac{1}{|\mathcal S^+|}\sum_{s\in\mathcal S^+}
        \mathcal L_{\mathrm{CRA}}(s)\right)$
    \EndIf
\EndFor

\State \textbf{Freeze $\phi^{\star}$}
\For{each minibatch $\mathcal B$ in $\mathcal D$} \Comment{\textbf{Stage 2: On-Policy Correction.}}
    \State $\mathcal Y\gets\operatorname{Rollout}(\pi_{\phi^{\star},\theta},\mathcal B)$
    \State Collect response prefixes $\mathcal S$ from $(\mathcal B,\mathcal Y)$
    \State $\mathcal S^+\gets\{s\in\mathcal S:g_{\phi^{\star}}(s)=1\}$
    \If{$\mathcal S^+\neq\varnothing$}
        \State Extract student features and $\mathcal K(s)$ for $s\in\mathcal S^+$
        \State Compute $q_S,q_T$ using Equation~\eqref{eq:normalized}
        \State $\theta\gets\operatorname{Update}\!\left(
        \theta,\frac{1}{|\mathcal S^+|}\sum_{s\in\mathcal S^+}
        \mathcal L_{\mathrm{CRA}}(s)\right)$
    \EndIf
\EndFor
\State \Return $(\phi^{\star},\theta)$
\end{algorithmic}
\end{algorithm}

\begin{table*}[h]
\centering\footnotesize
\setlength{\tabcolsep}{3.0pt}
\renewcommand{\arraystretch}{1.08}

\begin{tabular*}{\textwidth}{@{\extracolsep{\fill}}llrrrrr@{}}
\toprule
Setting & Student & DG width & $d_c$ & Heads & FFN width &  Intermediate layer \\
\midrule
Distilled students & 0.6B & 128 & 768 & 6 & 3,072 & 19 \\
 & 1.7B & 256 & 1,280 & 10 & 5,120 & 19 \\
 & 4B & 320 & 2,048 & 16 & 8,192 & 24 \\
\midrule
Cross-mode & 0.6B & 128 & 1,024 & 8 & 4,096 & 24 \\
 & 1.7B & 256 & 2,048 & 16 & 8,192 & 24 \\
 & 4B & 320 & 2,048 & 16 & 8,192 & 24 \\
\bottomrule
\end{tabular*}

\caption{SRA module dimensions. DG width is the hidden width of the gate MLP. FFN width is the intermediate dimension of the feedforward network in each CRA Transformer layer.}
\label{tab:sra-architecture}
\end{table*}

\section{Experimental Details}
\label{app:exp-settings}
\label{app:backbone-preparation}

The distillation experiments initialize students from Qwen3 Base and use Qwen3-8B with thinking disabled as the teacher. Tables~\ref{tab:kd-settings} lists the KD, OPD, and EOPD parameters. The OPD prompt and remaining shared settings follow Appendix~\ref{app:opd-checkpoints}. Example~\ref{ex:trajectory} at the end of the appendix shows a complete response of the Qwen3-4B OPD student with SRA on AIME 2025, in which the tokens generated at positions where DG activates are highlighted.

\subsection{SRA Training}
\label{app:sra-exp-training}

SRA uses the training prompt in Appendix~\ref{app:opd-checkpoints} and the procedure in Algorithm~\ref{alg:sra-training}. We initialize DG and CRA separately for each backbone and update only these modules. Table~\ref{tab:sra-settings} summarizes the training hyperparameters.

In the first epoch, the student generates responses on a 90\% split of DAPO-Math-17k. DG learns the TV labels, and CRA is trained at positions with $\mathrm{TV}\geq\tau$. In the second epoch, SRA generates responses on all 17,917 questions. DG is fixed, and CRA is updated at the positions selected by $g_\phi$. During the first epoch, DG makes two optimization passes over each rollout batch, while CRA makes one pass over the selected positions. In the second epoch, DG is frozen and CRA continues with one pass per rollout batch.

DG takes the student's final hidden state as input and predicts whether to activate CRA. It is implemented as a two-layer MLP with dimensions $d\rightarrow d/8\rightarrow 1$, where $d$ is the student hidden size. A ReLU activation follows the first linear layer, and a sigmoid maps the scalar output to a gate score. This score is compared with the threshold $\eta$ to obtain the binary decision $g_\phi(s_t)$.

CRA consists of two bidirectional Transformer layers. At each generation position, its inputs are the student's final hidden state, the hidden state from a selected intermediate layer, and the input embeddings of the student's top-$K$ candidate tokens. Each input type is processed by a separate RMSNorm and linear projection into a common dimension, with normalization and projection parameters shared across candidates. The resulting $K+2$ vectors are jointly processed by multi-head self-attention without a causal mask, allowing each candidate to attend to both hidden state vectors and all candidate vectors. Each Transformer layer contains an attention sublayer and a two-layer feed-forward network with a SiLU activation, both sublayers use pre-RMSNorm and residual connections. Table~\ref{tab:sra-architecture} lists the model widths, attention head counts, and selected intermediate layers.

After the two Transformer layers, CRA retains the $K$ candidate output vectors and applies a shared RMSNorm and scalar linear projection. The resulting scores are centered across candidates to obtain the logit residuals. The scalar projection's weights and bias are initialized to zero, so CRA initially produces zero residuals. 

For Smooth L1 regression, we start from the ideal residual $\delta^\star(s_t)$ defined in \eqref{eq:ideal_residual}, which is already centered over the student's $K$ candidates. We first clip each component to limit extreme target values:
\begin{equation}
\widehat{\delta}_i(s_t)
=\operatorname{clip}\!\left(\delta_i^\star(s_t),-12,12\right).
\end{equation}

Since clipping can change the mean across candidates, we center the clipped values again:
\begin{equation}
\widetilde{\delta}_i(s_t)
=\widehat{\delta}_i(s_t)
-\frac{1}{K}\sum_{j=1}^{K}\widehat{\delta}_j(s_t).
\end{equation}

The resulting target $\widetilde{\delta}(s_t)$ has zero mean, matching the centered residual produced by CRA. Both centering operations are performed across candidates at the same generation position.
Let $r_{t,i}=f_{\theta,i}(s_t)-\widetilde{\delta}_i(s_t)$ denote the prediction error for candidate $i$. We use Smooth L1 with $\beta=1$:
\begin{equation}
\operatorname{SmoothL1}_{\beta=1}(r)=
\begin{cases}
\frac12 r^2, & |r|<1,\\
|r|-\frac12, & |r|\geq1.
\end{cases}
\end{equation}
The residual regression loss averages these errors over the candidate set:
\begin{equation}
\mathcal L_{{s_t}}(\theta)
=\frac{1}{K}\sum_{i=1}^{K}
\operatorname{SmoothL1}_{\beta=1}(r_{t,i}).
\end{equation}

\paragraph{Derivation of the residual target.}
Fix a prefix $s_t$ and omit it for brevity. Let $z_k$ be the student logits on the candidate set, and let $q_S$ and $q_T$ be the student and teacher distributions normalized over this set, with $q_{T,i}>0$. Since sampling uses temperature 1, the student distribution is
\begin{equation}
    q_{S,i}=\exp(z_i)/Z_S , \quad Z_S=\sum_{j=1}^{K}\exp(z_j).
\end{equation}

Let $r_i=\log q_{T,i}-\log q_{S,i}$ and $\bar r=K^{-1}\sum_{j=1}^{K}r_j$, so that the ideal residual in Eq.~\eqref{eq:ideal_residual} is $\delta^\star_i=r_i-\bar r$. Then
\begin{equation}
\begin{aligned}
z_i+\delta^\star_i
&= z_i+\log q_{T,i}-\log q_{S,i}-\bar r\\
&= \log q_{T,i}+\log Z_S-\bar r .
\end{aligned}
\end{equation}
The term $\log Z_S-\bar r$ is the same for all candidates and cancels in the softmax, which gives
\begin{equation}
\operatorname{softmax}\!\left(z_K+\delta^\star\right)=q_T .
\end{equation}
Because the softmax is invariant only to a common shift of its inputs, $\delta^\star$ is the unique residual with zero mean that recovers $q_T$, which matches the centered parameterization $f_\theta=\mathcal C(b_\theta)$. During training, we clip $\delta^\star$ and center it again before using it as the Smooth L1 target.

\subsection{Discrepancy Gate Supervision}
\label{app:gate-supervision-analysis}

\textbf{DG threshold.} After the first epoch, we select $\eta$ by maximizing the gate F1 score~\citep{vanrijsbergen1979information} on a separate validation split of 1,792 DAPO questions using the 4B OPD model. As shown in Figure~\ref{fig:gate-label-f1}, $\eta=0.8$ achieves the highest F1 among the nine tested thresholds. We fix this threshold for all main distillation configurations.

We compare two supervision signals for DG on Qwen3-4B OPD. The TV target is $\mathbf 1[\mathrm{TV}(q_T,q_S)\geq0.1]$. The Top-1 target is $\mathbf 1[\arg\max q_T\neq\arg\max q_S]$, with both distributions normalized on the same student top-16 set. The two runs use the same DG architecture, training settings, and DAPO split, changing only the target labels.

After the first SRA epoch, we evaluate both gates on the reserved 1,792-question validation split and scan gate thresholds from 0.1 to 0.9. Figure~\ref{fig:gate-label-f1} reports each DG's F1 against its corresponding target. Both attain their best F1 at $\eta=0.8$, with 74.08\% for TV and 34.88\% for Top-1.  Direct Top-1 prediction remains challenging for DG under this training setup, while TV supervision yields substantially higher validation F1. 

We attribute this disparity to the representational nature of hidden states: they encapsulate information about the entire next token distribution rather than isolating the discrete top-1 token. Predicting hard top-1 mismatches from dense representations is therefore inherently noisy, whereas TV reflects continuous distributional divergence that aligns naturally with hidden features, making it substantially easier to learn.

\begin{figure}[t]
  \centering
  \includegraphics[width=\columnwidth]{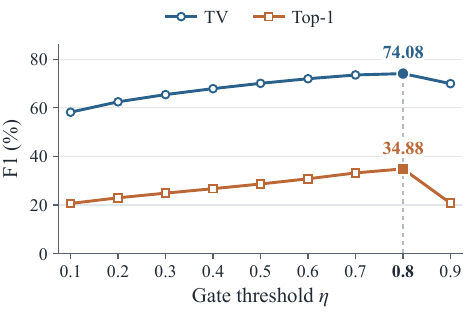}
  \caption{DG F1 on the DAPO validation split after the first SRA epoch with Qwen3-4B OPD. TV and Top-1 denote the two supervision targets. Each curve evaluates a separately trained gate against its own target, with the highest F1 at $\eta=0.8$.}
  \label{fig:gate-label-f1}
\end{figure}

\subsection{Out-of-domain Evaluation}
\label{app:ood-evaluation}
Since our SRA is a nonintervention method, which preserves the capability of the original model better, we speculate that it improves the out-of-domain (OOD) generalization performance of the model \citep{yi2021improved,yi2022breaking,wang2022out}. We evaluate Qwen3-4B students trained for two epochs with KD, OPD, and EOPD, and their corresponding SRA variants, on GPQA-Diamond~\citep{rein2024gpqa}, HumanEval~\citep{chen2021evaluating}, and MMLU-Pro~\citep{wang2024mmlupro}. We generate 16 responses per question with temperature=1, top $p=1$, and a maximum generation length of 31,744 tokens. The prompts used for each benchmark are given below.

\paragraph{HumanEval.}
We use the Qwen3 chat template with non-thinking mode. The system message requests Python code, and the user message is the original HumanEval prompt containing the function signature, docstring, and public examples. 

\begin{sraevaluationprompt}
\ttfamily\raggedright
\textnormal{\bfseries System}\par
You write Python code. Output only Python code. No markdown, no explanation.
\par\medskip
\textnormal{\bfseries User}\par
\{original HumanEval prompt\}
\end{sraevaluationprompt}

\paragraph{MMLU-Pro.}
We use a plain-text prompt with five CoT demonstrations from the validation set of the same subject. Each demonstration contains a question, its options, a rationale, and the answer. Rationales and answers appear only in the demonstrations. The test item ends with the reasoning cue shown below. The five demonstration slots in the template each expand to one complete validation example.

\begin{sraevaluationprompt}
\ttfamily\raggedright
The following are multiple choice questions (with answers) about \{subject\}. Think step by step and then finish your answer with "the answer is (X)" where X is the correct letter choice.
\par\medskip
\{validation\_example\_1\}
\par\medskip
\{validation\_example\_2\}
\par\medskip
\{validation\_example\_3\}
\par\medskip
\{validation\_example\_4\}
\par\medskip
\{validation\_example\_5\}
\par\medskip
Question:\par
\{test\_question\}\par
Options:\par
A. \{option\_A\}\par
B. \{option\_B\}\par
...\par
Answer: Let's think step by step.
\end{sraevaluationprompt}

\paragraph{GPQA-Diamond.}
We use a zero-shot user prompt with the Qwen3 chat template. Each question is followed by its four options and the instruction below. 

\begin{sraevaluationprompt}
\ttfamily\raggedright
\textnormal{\bfseries User}\par
\{question\}
\par\medskip
A. \{option\_A\}\par
B. \{option\_B\}\par
C. \{option\_C\}\par
D. \{option\_D\}
\par\medskip
Please reason step by step, and put your final answer within \textbackslash boxed\{\}.
\end{sraevaluationprompt}

Table~\ref{tab:ood-results} shows that SRA consistently enhances student generalization across diverse tasks without inducing catastrophic forgetting. Across all distillation baselines, SRA yields positive gains in both Avg@16 and Pass@16, uniformly boosting scientific reasoning on GPQA-Diamond while fully preserving coding proficiency on HumanEval and multidisciplinary accuracy on MMLU-Pro. These results confirm that SRA safely strengthens the policy without degrading broader capabilities.

\begin{table}[t]
\centering\begingroup
\fontsize{9}{9.5}\selectfont
\setlength{\tabcolsep}{0.8pt}
\renewcommand{\arraystretch}{1.0}
\setlength{\aboverulesep}{1.4pt}
\setlength{\belowrulesep}{1.8pt}
\setlength{\dimen0}{\dimexpr\columnwidth-10\tabcolsep\relax}
\begin{tabular}{@{}>{\raggedright\arraybackslash\leavevmode}p{.204\dimen0}>{\raggedright\arraybackslash\leavevmode}p{.133\dimen0}>{\raggedleft\arraybackslash\leavevmode}p{.153\dimen0}>{\raggedleft\arraybackslash\leavevmode}p{.158\dimen0}>{\raggedleft\arraybackslash\leavevmode}p{.183\dimen0}>{\raggedleft\arraybackslash\leavevmode}p{.169\dimen0}@{}}
\toprule
\multirow{2}{*}{Benchmark} & \multirow{2}{*}{Method} & \multicolumn{2}{c}{Student} & \multicolumn{2}{c}{\cellcolor{black!6}w/~SRA} \\
\cmidrule(lr){3-4}\cmidrule(l){5-6}
 & & Avg@16 & Pass@16 & \cellcolor{black!6}Avg@16 & \cellcolor{black!6}Pass@16 \\
\midrule
\multirow{3}{*}{\shortstack[l]{GPQA-\\Diamond}} & KD & 31.60 & 90.40 & \cellcolor{black!6}32.70 & \cellcolor{black!6}\textbf{91.41} \\
 & OPD & 31.31 & 89.39 & \cellcolor{black!6}\textbf{33.68} & \cellcolor{black!6}\underline{90.91} \\
 & EOPD & 32.23 & 87.88 & \cellcolor{black!6}\underline{32.95} & \cellcolor{black!6}90.40 \\
\midrule
\multirow{3}{*}{HumanEval} & KD & \underline{75.91} & 94.51 & \cellcolor{black!6}\textbf{76.49} & \cellcolor{black!6}94.51 \\
 & OPD & 71.34 & \textbf{95.73} & \cellcolor{black!6}71.65 & \cellcolor{black!6}\textbf{95.73} \\
 & EOPD & 71.80 & \underline{95.12} & \cellcolor{black!6}72.03 & \cellcolor{black!6}\textbf{95.73} \\
\midrule
\multirow{3}{*}{MMLU-Pro} & KD & 48.54 & \underline{89.60} & \cellcolor{black!6}49.37 & \cellcolor{black!6}\textbf{89.99} \\
 & OPD & 51.13 & 88.43 & \cellcolor{black!6}51.81 & \cellcolor{black!6}87.84 \\
 & EOPD & \underline{52.69} & 87.56 & \cellcolor{black!6}\textbf{52.82} & \cellcolor{black!6}87.93 \\
\midrule
\multirow{3}{*}{Average} & KD & 52.02 & \underline{91.50} & \cellcolor{black!6}\textbf{52.85}\textsuperscript{\textcolor{gain}{+0.83}} & \cellcolor{black!6}\textbf{91.97}\textsuperscript{\textcolor{gain}{+0.47}} \\
 & OPD & 51.26 & 91.18 & \cellcolor{black!6}52.38\textsuperscript{\textcolor{gain}{+1.12}} & \cellcolor{black!6}91.49\textsuperscript{\textcolor{gain}{+0.31}} \\
 & EOPD & 52.24 & 90.19 & \cellcolor{black!6}\underline{52.60}\textsuperscript{\textcolor{gain}{+0.36}} & \cellcolor{black!6}91.35\textsuperscript{\textcolor{gain}{+1.16}} \\
\bottomrule
\end{tabular}\endgroup
\caption{Out-of-domain results of Qwen3-4B distilled students. Shaded columns use SRA.}
\label{tab:ood-results}
\end{table}

\begin{figure}[t]
  \centering
  \includegraphics[width=\columnwidth]{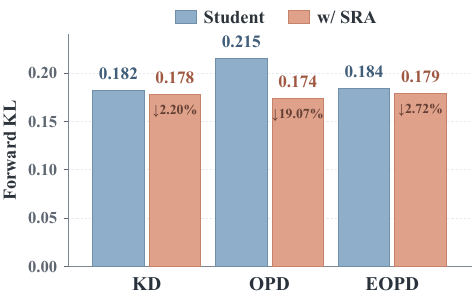}
  \caption{Forward KL within student top-16 candidates before and after SRA correction on Qwen3-4B. Reported values average DG selected positions across four mathematical benchmarks.}
  \label{fig:sra-fkl-alignment}
\end{figure}

\subsection{Teacher Alignment}
\label{app:teacher-alignment}

To assess teacher alignment, we analyze Qwen3-4B students across KD, OPD, and EOPD based on their SRA-generated responses on AIME24, AIME25, AMC23, and MATH-500. Specifically, at prefixes selected by the Discrepancy Gate, we compare the forward KL divergence between the teacher and student, $D_{\mathrm{KL}}(q_T\|q_S)$, against that of the corrected distribution, $D_{\mathrm{KL}}(q_T\|q_\theta)$. Both distributions are normalized over the shared student top-16 candidates. We compute averages across selected positions within each benchmark and report the macro average across all four tasks.

As depicted in Figure~\ref{fig:sra-fkl-alignment}, SRA consistently curtails forward KL divergence across all three distillation baselines. This reduction is most pronounced on OPD, dropping from 0.215 to 0.174, a relative improvement of 19.07\%. These results confirm that SRA reshapes probability allocation over student candidates, driving local predictions substantially closer to teacher behavior.

\begin{table*}[h]
\centering\begingroup
\fontsize{9}{9.5}\selectfont
\setlength{\tabcolsep}{2pt}
\renewcommand{\arraystretch}{1.0}
\setlength{\dimen0}{\dimexpr\textwidth-12\tabcolsep\relax}
\begin{tabular}{@{}>{\raggedright\arraybackslash\leavevmode}p{.22\dimen0}*{4}{>{\raggedleft\arraybackslash\leavevmode}p{.15\dimen0}}*{2}{>{\raggedleft\arraybackslash\leavevmode}p{.09\dimen0}}@{}}
\toprule
 & \multicolumn{4}{c}{Avg@16 / Pass@16 (\%)} & \multicolumn{2}{c}{Average (\%)} \\
\cmidrule(lr){2-5}\cmidrule(l){6-7}
Setting & AIME24 & AIME25 & AMC23 & MATH-500 & Avg@16 & Pass@16 \\
\midrule
\multicolumn{7}{l}{\textit{Final hidden state}} \\
w/o final hidden state & \underline{15.21}\,/\,\textbf{40.00} & \textbf{15.00}\,/\,\underline{43.33} & \underline{55.16}\,/\,\underline{87.50} & \underline{78.14}\,/\,\underline{93.80} & \underline{40.88} & \underline{66.16} \\
\rowcolor{black!6}
SRA & \textbf{16.04}\,/\,\textbf{40.00} & \underline{14.38}\,/\,\textbf{46.67} & \textbf{56.88}\,/\,\textbf{90.00} & \textbf{78.49}\,/\,\textbf{95.00} & \textbf{41.45} & \textbf{67.92} \\
\midrule
\multicolumn{7}{l}{\textit{Intermediate hidden state}} \\
w/o intermediate state & 13.33\,/\,30.00 & \textbf{16.04}\,/\,\underline{43.33} & 54.69\,/\,85.00 & 77.59\,/\,\underline{94.80} & 40.41 & 63.28 \\
L12 & \textbf{16.04}\,/\,\textbf{40.00} & \underline{15.42}\,/\,40.00 & 54.84\,/\,85.00 & \underline{77.84}\,/\,93.40 & 41.03 & 64.60 \\
\rowcolor{black!6}
L24 (SRA) & \textbf{16.04}\,/\,\textbf{40.00} & 14.38\,/\,\textbf{46.67} & \underline{56.88}\,/\,\textbf{90.00} & \textbf{78.49}\,/\,\textbf{95.00} & \textbf{41.45} & \textbf{67.92} \\
L32 & \underline{15.62}\,/\,\underline{36.67} & 14.38\,/\,\textbf{46.67} & \textbf{57.19}\,/\,\underline{87.50} & 77.81\,/\,94.00 & \underline{41.25} & \underline{66.21} \\
\midrule
\multicolumn{7}{l}{\textit{Residual supervision}} \\
$\lambda_R=0$ & 15.83\,/\,\underline{40.00} & \textbf{15.00}\,/\,\underline{40.00} & 53.91\,/\,\textbf{90.00} & 77.39\,/\,\underline{94.20} & 40.53 & \underline{66.05} \\
\rowcolor{black!6}
$\lambda_R=0.1$ (SRA) & \underline{16.04}\,/\,\underline{40.00} & \underline{14.38}\,/\,\textbf{46.67} & \textbf{56.88}\,/\,\textbf{90.00} & \textbf{78.49}\,/\,\textbf{95.00} & \textbf{41.45} & \textbf{67.92} \\
$\lambda_R=1$ & \textbf{16.88}\,/\,\textbf{43.33} & 13.54\,/\,36.67 & \underline{55.62}\,/\,\underline{87.50} & \underline{78.15}\,/\,93.60 & \underline{41.05} & 65.28 \\
\midrule
\multicolumn{7}{l}{\textit{Online training}} \\
Offline training & \underline{15.63}\,/\,\underline{33.33} & \textbf{17.29}\,/\,\underline{36.67} & \underline{54.38}\,/\,\underline{87.50} & \underline{78.15}\,/\,\underline{93.80} & \underline{41.36} & \underline{62.83} \\
\rowcolor{black!6}
SRA & \textbf{16.04}\,/\,\textbf{40.00} & \underline{14.38}\,/\,\textbf{46.67} & \textbf{56.88}\,/\,\textbf{90.00} & \textbf{78.49}\,/\,\textbf{95.00} & \textbf{41.45} & \textbf{67.92} \\
\midrule
\multicolumn{7}{l}{\textit{Alternative adaptations}} \\
OPD (2 epochs) & \underline{14.79}\,/\,36.67 & \textbf{15.83}\,/\,33.33 & 54.22\,/\,\textbf{90.00} & \underline{78.00}\,/\,\underline{94.20} & \underline{40.71} & 63.55 \\
Extra decoder & 14.37\,/\,30.00 & 13.96\,/\,33.33 & 53.75\,/\,\underline{87.50} & 77.70\,/\,93.00 & 39.95 & 60.96 \\
CRA without DG & 14.58\,/\,\textbf{43.33} & \underline{14.58}\,/\,\underline{40.00} & \underline{54.37}\,/\,82.50 & 77.58\,/\,93.20 & 40.28 & \underline{64.76} \\
\rowcolor{black!6}
SRA & \textbf{16.04}\,/\,\underline{40.00} & 14.38\,/\,\textbf{46.67} & \textbf{56.88}\,/\,\textbf{90.00} & \textbf{78.49}\,/\,\textbf{95.00} & \textbf{41.45} & \textbf{67.92} \\
\midrule
\multicolumn{7}{l}{\textit{Candidate set size}} \\
$K=4,\ \eta=0.6$ & \textbf{16.25}\,/\,\underline{36.67} & \textbf{15.00}\,/\,\textbf{50.00} & 53.12\,/\,\underline{87.50} & 77.89\,/\,\underline{94.20} & 40.57 & \underline{67.09} \\
$K=8,\ \eta=0.7$ & \textbf{16.25}\,/\,\textbf{40.00} & 13.96\,/\,33.33 & \underline{55.62}\,/\,\underline{87.50} & 78.12\,/\,93.20 & \underline{40.99} & 63.51 \\
\rowcolor{black!6}
$K=16,\ \eta=0.8$ (SRA) & \underline{16.04}\,/\,\textbf{40.00} & \underline{14.38}\,/\,\underline{46.67} & \textbf{56.88}\,/\,\textbf{90.00} & \textbf{78.49}\,/\,\textbf{95.00} & \textbf{41.45} & \textbf{67.92} \\
$K=32,\ \eta=0.7$ & 15.83\,/\,\underline{36.67} & 14.37\,/\,36.67 & 55.00\,/\,\underline{87.50} & \underline{78.17}\,/\,94.00 & 40.85 & 63.71 \\
\bottomrule
\end{tabular}\endgroup
\caption{Complete ablations and adaptation comparisons on Qwen3-4B OPD. Shaded rows repeat the same SRA reference. Bold and underline indicate the best and second best scores within each group. For candidate size ablations, $\eta$ is the DG threshold that achieves the highest F1 score on the validation set.}
\label{tab:ablation-overview}
\label{tab:component-tasks}
\label{tab:additional-comparisons}
\label{tab:ablation-tasks}
\label{tab:candidate-size-ablation}
\end{table*}

\section{Additional Comparisons and Ablation Details}
\label{app:full-results}
\label{app:ablation-settings}

This appendix provides the complete benchmark results for the component ablations in Section~\ref{sec:component-ablations}, together with additional comparisons of offline and online training, alternative adaptation architectures, and candidate set size $K$. All experiments use a frozen Qwen3-4B OPD student trained for two epochs, with results summarized in Table~\ref{tab:ablation-overview}. The full SRA reference is the configuration reported in Table~\ref{tab:main-results}, using L24 and $\lambda_R=0.1$.

\subsection{Component Ablations}
\label{app:component-ablations}

Unless otherwise specified, these experiments use the SRA training settings in Appendix~\ref{app:sra-exp-training}.

\paragraph{Final hidden state.}
We remove the final hidden state from CRA while retaining the intermediate hidden state from L24 and the candidate token embeddings. This comparison examines whether the final representation provides information beyond the other two inputs.

\paragraph{Intermediate hidden state.}
We first remove the intermediate hidden state to examine its contribution. We then compare intermediate layers 12, 24, and 32 while retaining the final hidden state and candidate embeddings. The default uses L24.

\paragraph{Residual supervision.}
We vary the Smooth L1 coefficient $\lambda_R$ over $\{0,0.1,1\}$, keeping the forward KL coefficient fixed at 1. Setting $\lambda_R=0$ removes direct residual regression, the remaining values compare two regression strengths. The default uses $\lambda_R=0.1$.

\paragraph{Results.}
Table~\ref{tab:ablation-overview} presents the component ablation results on Qwen3-4B, demonstrating that each architectural design in SRA is necessary and mutually reinforcing.
Removing the final hidden state incurs drops of 0.57 points and 1.76 points on average Avg@16 and Pass@16, respectively, confirming that sequence context from the final layer provides indispensable guidance beyond candidate token embeddings.
Omitting the intermediate hidden state triggers the most severe degradation, slashing average Avg@16 by 1.04 points and Pass@16 by 4.64 points, with AIME24 Pass@16 collapsing from 40.00\% to 30.00\%. Across representation depths, L24 consistently outperforms both L12 and L32 on all metrics, proving that mid-level representations offer the optimal depth for stabilizing reasoning trajectories.
Eliminating residual supervision under $\lambda_R=0$ decreases Avg@16 and Pass@16 by 0.92 points and 1.87 points, while overly aggressive regression under $\lambda_R=1$ also harms sampling coverage, dropping Pass@16 to 65.28\%. The default setting of $\lambda_R=0.1$ achieves the highest overall accuracy, validating that moderate residual guidance complements distribution matching.
These consistent performance drops across ablated variants verify the efficacy of our proposed design.

\subsection{Additional Ablation Studies}
\label{app:additional-comparisons}

We further examine candidate set size, training strategy, and module architecture. All comparisons use the same frozen Qwen3-4B OPD student trained for two epochs, with the additional modules trained for two epochs. Table~\ref{tab:ablation-overview} summarizes the results.

\paragraph{Candidate set size.}
We evaluate $K \in \{4, 8, 16, 32\}$ and train DG and CRA separately for each configuration while keeping the CRA architecture, intermediate layer L24 input, learning rate, and loss coefficients identical. After the first epoch, we select the DG threshold maximizing the validation F1 score. Among all choices, $K=16$ yields the best overall performance, attaining 41.45\% average Avg@16 and 67.92\% Pass@16. Expanding the candidate pool to $K=32$ degrades both metrics to 40.85\% and 63.71\%, confirming $K=16$ as the optimal configuration.

\paragraph{Training strategy.}
We compare the standard SRA procedure against an offline variant that reuses 17,917 static trajectories generated once by the frozen student across both stages. After stage one, DG is frozen and CRA continues training on the same fixed prefixes without refreshing rollouts. While the offline variant achieves a comparable average Avg@16 of 41.36\%, its average Pass@16 drops sharply to 62.83\% compared to 67.92\% for full SRA. This gap highlights the necessity of training CRA on dynamic rollouts generated by the enhanced policy.

\paragraph{Module architecture.}
We evaluate two alternatives to the complete DG and CRA design.
For the extra decoder baseline, we insert a duplicate of the student's final decoder layer before the final normalization, initialize its attention and MLP output projections to zero, and train only this added layer with 100.93M parameters, compared to 117.27M for DG and CRA together. It uses causal attention over the full vocabulary and trains with the OPD objective using AdamW, a learning rate of $3\times10^{-6}$, and a cosine schedule.

For CRA without DG, we retain CRA's architecture and loss while removing DG and the TV selection mask. CRA trains on all valid response positions and applies residual corrections across every generation step. Training runs for two epochs with AdamW using a constant learning rate of $5\times10^{-5}$, following the batch sizes and sampling setup in Appendix~\ref{app:sra-exp-training}. During evaluation, every step samples from the corrected top-16 distribution.

As reported in Table~\ref{tab:ablation-overview}, full SRA substantially outperforms both alternatives on Avg@16 and Pass@16, where the extra decoder and CRA without DG only achieve 60.96\% and 64.76\% Pass@16, respectively. These results confirm that selective residual gating provides far superior policy boosting over adding native decoder depth or unconstrained global correction.

\begin{table*}[ht]
\centering\footnotesize
\setlength{\tabcolsep}{2pt}
\renewcommand{\arraystretch}{1.08}
\setlength{\dimen0}{\dimexpr\textwidth-12\tabcolsep\relax}
\begin{tabular}{@{}>{\raggedright\arraybackslash}p{.20\dimen0}*{4}{>{\raggedleft\arraybackslash}p{.13\dimen0}}*{2}{>{\raggedleft\arraybackslash}p{.14\dimen0}}@{}}
\toprule
 & \multicolumn{6}{c}{Avg@16 / Pass@16 (\%)} \\
\cmidrule(l){2-7}
Setting & AIME24 & AIME25 & AMC23 & MATH-500 & Average & Recovery \\
\midrule
\multicolumn{7}{c}{\textit{Qwen3-0.6B}} \\
\midrule
non-thinking & \makebox[23pt][r]{1.25}\,/\,\makebox[25pt][r]{3.33} & \makebox[23pt][r]{2.50}\,/\,\makebox[25pt][r]{16.67} & \makebox[23pt][r]{23.75}\,/\,\makebox[25pt][r]{62.50} & \makebox[23pt][r]{43.16}\,/\,\makebox[25pt][r]{81.60} & \makebox[23pt][r]{17.67}\,/\,\makebox[25pt][r]{41.02} & -- \\
\rowcolor{black!6}
\quad w/~SRA & \makebox[23pt][r]{\underline{3.33}}\,/\,\makebox[25pt][r]{\underline{13.33}} & \makebox[23pt][r]{\underline{6.04}}\,/\,\makebox[25pt][r]{\underline{23.33}} & \makebox[23pt][r]{\underline{29.22}}\,/\,\makebox[25pt][r]{\underline{72.50}} & \makebox[23pt][r]{\underline{53.74}}\,/\,\makebox[25pt][r]{\underline{88.60}} & \makebox[23pt][r]{\underline{23.08}}\,/\,\makebox[25pt][r]{\underline{49.44}} & \makebox[23pt][r]{62.94}\,/\,\makebox[25pt][r]{79.05} \\
thinking & \makebox[23pt][r]{\textbf{11.67}}\,/\,\makebox[25pt][r]{\textbf{43.33}} & \makebox[23pt][r]{\textbf{14.58}}\,/\,\makebox[25pt][r]{\textbf{33.33}} & \makebox[23pt][r]{\textbf{48.12}}\,/\,\makebox[25pt][r]{\textbf{82.50}} & \makebox[23pt][r]{\textbf{72.29}}\,/\,\makebox[25pt][r]{\textbf{91.00}} & \makebox[23pt][r]{\textbf{36.67}}\,/\,\makebox[25pt][r]{\textbf{62.54}} & -- \\
\midrule
\multicolumn{7}{c}{\textit{Qwen3-1.7B}} \\
\midrule
non-thinking & \makebox[23pt][r]{13.33}\,/\,\makebox[25pt][r]{43.33} & \makebox[23pt][r]{10.62}\,/\,\makebox[25pt][r]{33.33} & \makebox[23pt][r]{44.06}\,/\,\makebox[25pt][r]{82.50} & \makebox[23pt][r]{71.00}\,/\,\makebox[25pt][r]{91.80} & \makebox[23pt][r]{34.76}\,/\,\makebox[25pt][r]{62.74} & -- \\
\rowcolor{black!6}
\quad w/~SRA & \makebox[23pt][r]{\underline{39.58}}\,/\,\makebox[25pt][r]{\underline{73.33}} & \makebox[23pt][r]{\underline{30.63}}\,/\,\makebox[25pt][r]{\underline{56.67}} & \makebox[23pt][r]{\underline{72.66}}\,/\,\makebox[25pt][r]{\underline{95.00}} & \makebox[23pt][r]{\underline{86.06}}\,/\,\makebox[25pt][r]{\textbf{97.00}} & \makebox[23pt][r]{\underline{57.23}}\,/\,\makebox[25pt][r]{\underline{80.50}} & \makebox[23pt][r]{85.74}\,/\,\makebox[25pt][r]{93.58} \\
thinking & \makebox[23pt][r]{\textbf{51.67}}\,/\,\makebox[25pt][r]{\textbf{80.00}} & \makebox[23pt][r]{\textbf{38.54}}\,/\,\makebox[25pt][r]{\textbf{70.00}} & \makebox[23pt][r]{\textbf{87.03}}\,/\,\makebox[25pt][r]{\textbf{97.50}} & \makebox[23pt][r]{\textbf{89.76}}\,/\,\makebox[25pt][r]{\underline{96.60}} & \makebox[23pt][r]{\textbf{66.75}}\,/\,\makebox[25pt][r]{\textbf{86.02}} & -- \\
\midrule
\multicolumn{7}{c}{\textit{Qwen3-4B}} \\
\midrule
non-thinking & \makebox[23pt][r]{24.58}\,/\,\makebox[25pt][r]{\underline{50.00}} & \makebox[23pt][r]{20.42}\,/\,\makebox[25pt][r]{\underline{46.67}} & \makebox[23pt][r]{67.66}\,/\,\makebox[25pt][r]{\underline{95.00}} & \makebox[23pt][r]{82.75}\,/\,\makebox[25pt][r]{96.20} & \makebox[23pt][r]{48.85}\,/\,\makebox[25pt][r]{71.97} & -- \\
\rowcolor{black!6}
\quad w/~SRA & \makebox[23pt][r]{\underline{68.96}}\,/\,\makebox[25pt][r]{\textbf{83.33}} & \makebox[23pt][r]{\underline{61.88}}\,/\,\makebox[25pt][r]{\textbf{83.33}} & \makebox[23pt][r]{\underline{95.16}}\,/\,\makebox[25pt][r]{\textbf{100.00}} & \makebox[23pt][r]{\underline{93.21}}\,/\,\makebox[25pt][r]{\textbf{98.00}} & \makebox[23pt][r]{\underline{79.80}}\,/\,\makebox[25pt][r]{\textbf{91.17}} & \makebox[23pt][r]{96.38}\,/\,\makebox[25pt][r]{100.22} \\
thinking & \makebox[23pt][r]{\textbf{72.29}}\,/\,\makebox[25pt][r]{\textbf{83.33}} & \makebox[23pt][r]{\textbf{67.08}}\,/\,\makebox[25pt][r]{\textbf{83.33}} & \makebox[23pt][r]{\textbf{97.66}}\,/\,\makebox[25pt][r]{\textbf{100.00}} & \makebox[23pt][r]{\textbf{94.17}}\,/\,\makebox[25pt][r]{\underline{97.20}} & \makebox[23pt][r]{\textbf{82.80}}\,/\,\makebox[25pt][r]{\underline{90.97}} & -- \\
\bottomrule
\end{tabular}
\caption{Cross-mode results for Qwen3. Shaded rows apply SRA in non-thinking mode. Bold indicates best performance and underline indicates second best within each size. Recovery is the fraction of thinking performance attained by SRA.}
\label{tab:cross-mode}
\end{table*}

\begin{table*}[h]
\centering\begingroup
\fontsize{9}{10}\selectfont
\setlength{\tabcolsep}{3pt}
\renewcommand{\arraystretch}{1.08}
\setlength{\dimen0}{\dimexpr\textwidth-10\tabcolsep\relax}
\begin{tabular}{@{}>{\raggedright\arraybackslash}p{0.19\dimen0}>{\raggedleft\arraybackslash}p{0.162\dimen0}>{\raggedleft\arraybackslash}p{0.162\dimen0}>{\raggedleft\arraybackslash}p{0.162\dimen0}>{\raggedleft\arraybackslash}p{0.162\dimen0}>{\raggedleft\arraybackslash}p{0.162\dimen0}@{}}
\toprule
 & \multicolumn{5}{c}{Avg@16 / Pass@16 (\%)} \\
\cmidrule(l){2-6}
Method & AIME24 & AIME25 & AMC23 & MATH-500 & Average \\
\midrule
Initial Instruct (3B) & \underline{1.25}\,/\, \underline{6.67} & 0.00\,/\, 0.00 & 9.22\,/\, \textbf{55.00} & 27.91\,/\, \underline{71.80} & 9.60\,/\, \underline{33.37} \\
\rowcolor{black!6}
\quad w/~SRA & \textbf{1.67}\,/\, \textbf{13.33} & \underline{0.21}\,/\, \underline{3.33} & \textbf{10.78}\,/\, \textbf{55.00} & \underline{28.31}\,/\, \textbf{73.40} & \underline{10.24}\,/\, \textbf{36.27} \\
\midrule
KD & 0.42\,/\, \underline{6.67} & 0.00\,/\, 0.00 & 8.12\,/\, 47.50 & 24.40\,/\, 63.60 & 8.24\,/\, 29.44 \\
OPD & 0.42\,/\, \underline{6.67} & \underline{0.21}\,/\, \underline{3.33} & \underline{10.00}\,/\, \underline{50.00} & \textbf{32.10}\,/\, 69.60 & \textbf{10.68}\,/\, 32.40 \\
EOPD & 0.00\,/\, 0.00 & \textbf{0.42}\,/\, \textbf{6.67} & 8.91\,/\, 42.50 & 27.09\,/\, 65.00 & 9.10\,/\, 28.54 \\
\midrule
Teacher (8B) & 1.25\,/\, 16.67 & 0.00\,/\, 0.00 & 14.84\,/\, 57.50 & 35.11\,/\, 74.80 & 12.80\,/\, 37.24 \\
\bottomrule
\end{tabular}
\endgroup
\caption{Performance comparison of Llama-3.2-3B-Instruct and distillation baselines with an 8B teacher reference. The shaded row applies SRA to the initial Instruct model. Bold indicates best performance and underline indicates second best among students.}
\label{tab:llama-results}
\end{table*}

\section{Cross-Mode Distillation}
\label{app:cross-mode}

\paragraph{Experimental setting.}
We further evaluate whether SRA can transfer reasoning capabilities between the two modes of the same model. At each scale, we treat the non-thinking mode of the  Qwen3 model as the student and its thinking mode as the teacher. The teacher provides probability targets along the student responses, while SRA is trained with the model parameters frozen. Related work also studies distillation between the thinking and non-thinking modes of a single model~\citep{kalwar2026masked}.

Training follows Appendix~\ref{app:sra-exp-training}, with the module dimensions and intermediate layers specified in Table~\ref{tab:sra-architecture}. After the first epoch, the gate threshold is selected by maximizing F1~\citep{vanrijsbergen1979information} on a separate validation split. All three scales select $\eta=0.7$. Evaluation follows Section~\ref{sec:experiments}.

For each metric $m\in\{\mathrm{Avg@16},\mathrm{Pass@16}\}$, we report the fraction of native thinking performance attained by SRA:
\begin{equation}
\mathrm{Recovery}_m=
\frac{\bar m_{\mathrm{SRA}}}{\bar m_{\mathrm{thinking}}}\times100\%,
\label{eq:thinking-recovery}
\end{equation}
where $\bar m$ denotes the Average score across the four benchmarks.

\paragraph{Results.}
Table~\ref{tab:cross-mode} shows that SRA improves both average Avg@16 and Pass@16 over the native non-thinking mode at every model size. It attains 62.94\%, 85.74\%, and 96.38\% of native thinking Avg@16 at 0.6B, 1.7B, and 4B, respectively. At 4B, Avg@16 increases from 48.85\% to 79.80\%, approaching the thinking mode's 82.80\%, while Pass@16 reaches 91.17\%, compared with 90.97\% for native thinking. These results show that SRA can use supervision from a model's own thinking mode to improve its non-thinking reasoning performance.

\section{Experiments on LLaMA}
\label{app:LLaMA-results}

We use Llama-3.2-3B-Instruct~\citep{meta2024llama32} as the initial student and Llama-3.1-8B-Instruct as the teacher. KD, OPD, and EOPD train for two epochs. SRA trains DG and CRA directly on the frozen initial Instruct model.

\paragraph{Training.} We use DAPO-Math-17k and the distillation objectives in Appendix~\ref{app:backbone-preparation}. SRA follows Appendix~\ref{app:sra-exp-training}, with $K=16$, TV threshold $\tau=0.1$, and gate threshold $\eta=0.9$ fixed during the second training epoch and evaluation. DG and CRA use learning rates of $10^{-4}$ and $5\times10^{-5}$. CRA uses the layer-19 representation and two Transformer layers with width 2,048, 16 attention heads, and feedforward width 8,192.

\paragraph{Evaluation.} All models use temperature=1, top-$p=1$, 16 responses per question, and a response limit of 31,744 tokens. We use the native Llama chat template with the evaluation prompt in Appendix~\ref{app:prompts}.

\paragraph{Results.} Table~\ref{tab:llama-results} presents the evaluation results on mathematical reasoning benchmarks. Applying SRA consistently enhances the initial Instruct baseline, elevating the average Avg@16 / Pass@16 from 9.60 / 33.37 to 10.24 / 36.27 and achieving the highest Pass@16 score among all student variants, nearly matching the 8B teacher performance of 37.24\%. In contrast, standard distillation baselines perform suboptimally. KD and EOPD degrade both metrics, and all three distillation methods reduce Pass@16 relative to the initial model. Although OPD yields a slightly higher Avg@16, its drop in Pass@16 aligns with prior observations that distillation can degrade post-trained student models~\citep{li2026rethinking}.

\begin{figure*}[p]\refstepcounter{sraexample}\label{ex:trajectory}
\begin{sraexbox}{Example~\thesraexample: AIME 2025 problem under SRA (1/3)}
\textbf{Question.} In $\triangle ABC$ points $D$ and $E$ lie on $\overline{AB}$ so that $AD < AE < AB$, while points $F$ and $G$ lie on $\overline{AC}$ so that $AF < AG < AC$. Suppose $AD = 4$, $DE = 16$, $EB = 8$, $AF = 13$, $FG = 52$, and $GC = 26$. Let $M$ be the reflection of $D$ through $F$, and let $N$ be the reflection of $G$ through $E$. The area of quadrilateral $DEGF$ is $288$. Find the area of heptagon $AFNBCEM$.

\smallskip\textbf{Reference answer.} $588$

\sraexrule
{\footnotesize \gk{Blue}: DG activates and the sampled token is the student's top-1 candidate. \gc{Orange}: DG activates and the sampled token is not the student's top-1 candidate. Unmarked tokens are sampled from $\pi_S$ with $g_\phi(s_t)=0$. A highlighted line break appears as a short block at the end of a line. DG activates at 269 of the 2,510 tokens, 10.7\%, and 111 of these positions sample a token other than the student's top-1 candidate.\par}
\sraexrule
{\ttfamily\small\raggedright\setlength{\parindent}{0pt}\setlength{\parskip}{0pt}
We are given \gk{a} \gk{triangle} \$ \textbackslash{}triangle ABC \$ with \gk{points}\gc{:}\par
\vspace{0.3\baselineskip}
{-} \$ D \$ and \$ E \$ on \$ \textbackslash{}overline\{AB\} \$, where \$ AD =  4 \$, \$ DE =  16 \$, \$ EB =  8 \$,\par
{-} \$ F \$ and \$ G \$ on \$ \textbackslash{}overline\{AC\} \$, where \$ AF =  13 \$, \$ FG =  52 \$, \$ GC =  26 \$\gk{,}\par
{-} \gk{\$} M \$ is the reflection of \$ D \$ through \$ F \$,\par
{-} \$ N \$ is the reflection of \$ G \$ through \$ E \$,\par
{-} \gk{The} area of quadrilateral \$ DEGF \$ is  288\gk{.}\par
\vspace{0.3\baselineskip}
We are to find the area of heptagon \$ AFNBCEM \$.\par
\vspace{0.3\baselineskip}
{-}{-}{-}\par
\vspace{0.3\baselineskip}
\#\#\# Step  1: \gk{Determine} \gk{the} \gc{length} of \gk{\$} AB \gk{\$} \gk{and} \$ AC \$\par
\vspace{0.3\baselineskip}
\gk{From} \gk{the} \gc{segment} \gc{data}\gc{:}\par
\vspace{0.3\baselineskip}
{-} \$ AD =  4 \$, \$ DE =  16 \$, \$ EB =  8 \gk{\$}, so\gc{\hspace{0.5em}}\par
\gk{~} \$\$\par
~ ~AB = AD + DE + EB =  4 +  16 +  8 =  28\par
~ ~\gk{\$\$}\par
\vspace{0.3\baselineskip}
{-} \$ AF =  13 \$, \$ FG =  52 \$, \$ GC =  26 \$, so\par
~ ~\$\$\par
~ ~AC = AF + FG + GC =  13 +  52 +  26 =  91\par
~ ~\$\$\par
\vspace{0.3\baselineskip}
{-}{-}{-}\par
\vspace{0.3\baselineskip}
\#\#\# Step  2: \gc{Understand} \gk{the} \gc{reflections}\gk{\hspace{0.5em}}\par
\vspace{0.3\baselineskip}
\gk{{-}} \$ M \$ is the reflection of \$ D \$ through \$ F \$. \gc{That} means \gk{\$} F \$ is the midpoint of \$ \gk{DM} \$, so\gc{\hspace{0.5em}}\par
~ ~\$\$\par
~ ~\gc{FM} = FD\gk{\hspace{0.5em}}\par
~ ~\gk{\$\$}\par
\vspace{0.3\baselineskip}
{-} \$ N \$ is the reflection of \$ G \$ through \$ E \$. That means \$ E \$ is the midpoint of \$ GN \$, so\par
~ ~\$\$\par
~ ~\gk{EN} = EG\par
~ ~\$\$\par
\vspace{0.3\baselineskip}
\gk{{-}{-}{-}}\par
\vspace{0.3\baselineskip}
\#\#\# Step  3: \gk{Use} \gc{coordinate} geometry \gk{to} \gc{place} \gc{the} points\par
\vspace{0.3\baselineskip}
\gk{Let}\textquotesingle{}s \gk{place} \gc{the} triangle \gc{\$} ABC \$ in the coordinate plane\gc{.}\par
\vspace{0.3\baselineskip}
\gk{Let}\textquotesingle{}s \gc{set}\gk{:}\par
\vspace{0.3\baselineskip}
{-} \$ A = (0,  0) \$\par
{-} \gk{\$} B = (28,  0) \$ \gc{since} \$ AB =  28 \$\par
{-} \gk{\$} C = (0, \gc{h}) \$ \gc{for} some \gk{height} \$ h \gc{\$}\par
\vspace{0.3\baselineskip}
\gc{Now} \gc{place} \gk{points} \gk{\$} D, E \$ on \$ AB \$\gk{,} \gk{and} \$ F, G \$ on \$ AC \$ \gc{accordingly}.\par
\vspace{0.3\baselineskip}
\gc{**}\gc{Points} on \$ AB \$:**\par
\vspace{0.3\baselineskip}
{-} \$ AD =  4 \textbackslash{}Rightarrow D = (4,  0) \$\par
{-} \$ \gc{AE} = AD + DE =  4 +  16 =  20 \textbackslash{}Rightarrow E = (20,  0) \$\par
{-} \$ \gk{EB} =  8 \textbackslash{}Rightarrow \gk{B} = (28,  0) \$, \gc{which} \gc{checks} \gk{out}\par
\vspace{0.3\baselineskip}
**Points on \$ AC \$:**\par
\vspace{0.3\baselineskip}
\gc{We} \gc{will} \gc{find} \gc{the} \gk{coordinates} of \$ \gk{F} \$ and \$ G \$\gc{,} \gc{then} \gk{use} \gk{the} \gc{area} \gc{formula} \gc{to} \gc{compute} \gk{the} \gk{area} of \gc{\$} \gc{\textbackslash{}}triangle ABC \$\gc{.}\par
\vspace{0.3\baselineskip}
\gc{Since} \$ \gc{C} = (0, h) \$, \gc{then} \gk{the} \gc{vector} \gk{from} \$ A \$ to \$ C \$ is \$ \textbackslash{}vec\{AC\} = (0, h) \gk{\$}\par
\par}
\end{sraexbox}
\end{figure*}

\begin{figure*}[p]
\begin{sraexbox}{Example~\thesraexample: AIME 2025 problem under SRA (2/3)}
{\ttfamily\small\raggedright\setlength{\parindent}{0pt}\setlength{\parskip}{0pt}
\gc{{-}} \$ AF =  13 \gk{\$}, \gk{so} \gc{the} \gc{vector} \gc{representing} \gk{\$} \gc{F} \$ \gk{is} \gk{\$} \gk{\textbackslash{}}\gk{frac}\{13\}\{91\} \textbackslash{}vec\{AC\} = \textbackslash{}left(0, \textbackslash{}frac\{13h\}\{91\gk{\}\textbackslash{}}right) \gc{=} \textbackslash{}left(0, \textbackslash{}frac\{h\}\{7\}\textbackslash{}right) \gk{\$}\par
~ ~So \gc{\$} F = \textbackslash{}left(0, \textbackslash{}frac\{h\}\{7\}\textbackslash{}right) \$\par
\vspace{0.3\baselineskip}
{-} \$ \gk{FG} =  52 \$, \gk{so} \gc{\$} \gk{G} \gc{=} \gk{F} + \textbackslash{}frac\{52\}\{91\} \textbackslash{}vec\{AC\} = \textbackslash{}left(0, \textbackslash{}frac\{h\}\{7\} + \textbackslash{}frac\{52h\}\{91\}\textbackslash{}right) \gk{=} \textbackslash{}left(0, \textbackslash{}frac\{\gk{1}3h \gc{+}  52h\}\{91\} \textbackslash{}right) = \textbackslash{}left(0, \textbackslash{}frac\{65h\}\{91\} \textbackslash{}right) \gk{=} \textbackslash{}left(0, \textbackslash{}frac\{5h\}\{7\} \textbackslash{}right) \$\par
\vspace{0.3\baselineskip}
\gk{So}\gc{,} \gk{the} \gk{coordinates} are:\par
\vspace{0.3\baselineskip}
{-} \$ A = (0,  0) \$\par
{-} \$ \gk{B} = (28,  0) \$\par
{-} \$ C = (0, h) \$\par
{-} \$ D = (4,  0) \$\par
{-} \$ E = (20,  0) \$\par
{-} \$ F = \textbackslash{}left(0, \textbackslash{}frac\{h\}\{7\} \textbackslash{}right) \$\par
{-} \$ G = \textbackslash{}left(0, \textbackslash{}frac\{5h\}\{7\} \textbackslash{}right) \$\par
\vspace{0.3\baselineskip}
\gc{Now}\gk{,} \gk{we} \gc{compute} \gk{the} \gk{area} of triangle \$ \gk{ABC} \$\gc{,} \gk{and} \gk{use} \gk{the} \gk{area} of quadrilateral \$ DEGF \$ to \gk{find} \$ h \$.\par
\vspace{0.3\baselineskip}
\gc{The} area of triangle \$ ABC \$ is:\par
\vspace{0.3\baselineskip}
\$\$\par
\textbackslash{}text\{Area\gk{\}\_\{}ABC\} = \textbackslash{}frac\{1\}\{2\} \textbackslash{}times \gk{AB} \textbackslash{}times \gk{h} = \textbackslash{}frac\{1\}\{2\} \textbackslash{}times  28 \textbackslash{}times h =  14h\par
\$\$\par
\vspace{0.3\baselineskip}
\gc{Now}\gk{,} \gc{the} \gk{area} of quadrilateral \$ DEGF \$ is \gc{the} \gc{sum} of \gk{the} areas of \gk{triangles} \$ \gc{D}\gk{FG} \$ and \$ \gc{E}\gk{FG} \$\gk{,} \gc{but} \gc{a} \gk{better} \gc{way} is to \gk{use} \gk{the} \gk{coordinates} \gk{of} \gk{the} \gc{four} points \gc{and} \gc{use} the \gc{**}\gk{sh}oelace formula**.\par
\vspace{0.3\baselineskip}
\gk{Let}\textquotesingle{}s \gc{list} \gc{them} \gk{in} order\gk{:}\par
\vspace{0.3\baselineskip}
\gk{{-}} \$ D = (4,  0) \$\par
{-} \$ \gk{E} = (20,  0) \$\par
{-} \$ G = (0, \textbackslash{}frac\{5h\}\{7\}) \$\par
{-} \$ F = (0, \textbackslash{}frac\{h\}\{7\}) \gk{\$}\par
\vspace{0.3\baselineskip}
\gc{Use} the shoelace formula\gk{:}\par
\vspace{0.3\baselineskip}
\$\$\par
\textbackslash{}text\{Area\}\_\{DEGF\} = \textbackslash{}frac\{1\}\{2\} \textbackslash{}left| \gk{x}\_1y\_2 + x\_2y\_3 + x\_3y\_4 + x\_4y\_1 {-} (y\_1x\_2 + y\_2x\_3 + y\_3x\_4 + y\_4x\_1) \textbackslash{}right|\par
\$\$\par
\vspace{0.3\baselineskip}
\gc{Sub}stitute \gc{the} points:\par
\vspace{0.3\baselineskip}
\gk{{-}} \$ D = (4,  0) \$\par
{-} \$ E = (20,  0) \$\par
{-} \$ G = (0, \textbackslash{}frac\{5h\}\{7\}) \$\par
{-} \$ F = (0, \textbackslash{}frac\{h\}\{7\}) \$\par
\vspace{0.3\baselineskip}
\gc{Sh}oelace \gc{formula}\gk{:}\par
\vspace{0.3\baselineskip}
\$\$\par
\textbackslash{}text\{Area\}\_\{DEGF\} = \textbackslash{}frac\{1\}\{2\} \textbackslash{}left|  4 \textbackslash{}cdot  0 +  20 \textbackslash{}cdot \textbackslash{}frac\{5h\}\{7\} +  0 \textbackslash{}cdot \textbackslash{}frac\{h\}\{7\} +  0 \textbackslash{}cdot  0 {-} (0 \textbackslash{}cdot  20 +  0 \textbackslash{}cdot  0 + \textbackslash{}frac\{5h\}\{7\} \textbackslash{}cdot  0 + \textbackslash{}frac\{h\}\{7\} \textbackslash{}cdot  4) \textbackslash{}right|\par
\$\$\par
\vspace{0.3\baselineskip}
\gk{\$\$}\par
= \textbackslash{}frac\{1\}\{2\} \textbackslash{}left|  0 + \textbackslash{}frac\{100h\}\{7\} +  0 +  0 {-} \gk{(}0 +  0 +  0 + \textbackslash{}frac\{4h\}\{7\}) \textbackslash{}right| = \textbackslash{}frac\{1\}\{2\} \textbackslash{}left| \textbackslash{}frac\{100h\}\{7\} {-} \textbackslash{}frac\{4h\}\{7\} \textbackslash{}right| = \textbackslash{}frac\{1\}\{2\} \textbackslash{}cdot \textbackslash{}frac\{96h\}\{7\} = \textbackslash{}frac\{48h\}\{7\}\par
\$\$\par
\vspace{0.3\baselineskip}
We are told \gc{this} \gk{area} is  288:\par
\vspace{0.3\baselineskip}
\$\$\par
\textbackslash{}frac\{48h\}\{7\} =  288 \textbackslash{}Rightarrow h = \textbackslash{}frac\{288 \textbackslash{}times  7\}\{48\} = \textbackslash{}frac\{2016\}\{48\} =  42\par
\$\$\par
\vspace{0.3\baselineskip}
So\gk{,} \gc{the} \gk{height} \gk{\$} h =  42 \$\par
\vspace{0.3\baselineskip}
\gk{{-}{-}{-}}\par
\vspace{0.3\baselineskip}
\#\#\# Step  4: \gc{Use} \gc{the} \gk{coordinates} \gc{with} \$ h =  42 \$\par
\par}
\end{sraexbox}
\end{figure*}

\begin{figure*}[p]
\begin{sraexbox}{Example~\thesraexample: AIME 2025 problem under SRA (3/3)}
{\ttfamily\small\raggedright\setlength{\parindent}{0pt}\setlength{\parskip}{0pt}
\gc{{-}} \$ \gk{A} = (0,  0) \$\par
{-} \$ B = (28,  0) \$\par
{-} \$ C = (0,  42) \$\par
{-} \$ D = (4,  0) \$\par
{-} \$ E = (20,  0) \$\par
{-} \$ F = \textbackslash{}left(0, \textbackslash{}frac\{42\}\{7\} \textbackslash{}right) = (0,  6) \$\par
{-} \$ G = \textbackslash{}left(0, \textbackslash{}frac\{\gc{2}10\}\{7\} \textbackslash{}right) = (0,  30) \$\par
\vspace{0.3\baselineskip}
\gk{Now} \gc{compute} \gk{the} \gk{coordinates} of \gk{\$} M \$ and \$ N \$\gk{:}\par
\vspace{0.3\baselineskip}
{-} \$ M \$ is the reflection of \$ D \gk{\$} through \$ F \gk{\$}\gk{,} \gk{so} \gc{since} \$ \gk{F} \$ is \gk{the} midpoint of \$ DM \$, \gk{then}\gk{:}\par
\vspace{0.3\baselineskip}
\$\$\par
\gc{\textbackslash{}}\gk{vec}\{\gk{FM}\} = \textbackslash{}vec\{FD\gk{\}} \textbackslash{}Rightarrow \gk{M} =\gk{~}2\gk{F} {-} D =  2(0,  6) {-} (4,  0) = ({-}4,  12)\par
\$\$\par
\vspace{0.3\baselineskip}
{-} \$ N \$ is the reflection of \$ G \$ through \$ E \$, so \gc{since} \$ E \$ is the midpoint of \$ GN \$, then:\par
\vspace{0.3\baselineskip}
\$\$\par
\textbackslash{}vec\{EN\} = \textbackslash{}vec\{EG\} \textbackslash{}Rightarrow N =  2E {-} G =  2(20,  0) {-} (0,  30) = (40, {-}30)\par
\$\$\par
\vspace{0.3\baselineskip}
\gk{Now}\gc{,} \gc{list} all \gc{vertices} of heptagon \$ AFNBCEM \$ \gk{in} order\gk{:}\par
\vspace{0.3\baselineskip}
\gc{\$\$}\par
\gk{A} = (0,  0), \gc{\textbackslash{}}quad F = (0,  6), \textbackslash{}quad N = (40, {-}30), \textbackslash{}quad B = (28,  0), \textbackslash{}quad C = (0,  42), \textbackslash{}quad E = (20,  0), \textbackslash{}quad M = ({-}4,  12)\par
\$\$\par
\vspace{0.3\baselineskip}
\gc{Use} the **shoelace formula** \gc{to} compute the area of \gc{this} heptagon.\par
\vspace{0.3\baselineskip}
\gk{List} \gk{the} \gk{points} \gk{in} order \gc{and} \gk{repeat} the \gk{first} \gk{point}\gc{:}\par
\vspace{0.3\baselineskip}
\gk{\$\$}\par
\gk{(}0,  0), (0,  6), (40, {-}30), (28,  0), (0,  42), (20,  0), ({-}4,  12), (0,  0)\par
\$\$\par
\vspace{0.3\baselineskip}
\gc{Now} \gk{apply} the shoelace formula:\par
\vspace{0.3\baselineskip}
\$\$\par
\textbackslash{}text\{Area\} = \textbackslash{}frac\{1\}\{2\} \textbackslash{}left| \gk{\textbackslash{}}sum x\_i y\_\{i+1\} {-} \textbackslash{}sum y\_i x\_\{i+1\} \textbackslash{}right|\par
\$\$\par
\vspace{0.3\baselineskip}
\gk{Compute} \gc{first} \gk{sum}\gc{:} \gk{\$} \textbackslash{}sum x\_i y\_\{i+1\} \$:\par
\vspace{0.3\baselineskip}
{-} \$  0 \textbackslash{}cdot  6 \gk{=}  0 \$\par
{-} \$  0 \textbackslash{}cdot {-}30 =  0 \$\par
{-} \$  40 \textbackslash{}cdot  0 =  0 \$\par
{-} \$  28 \textbackslash{}cdot  42 =  1176 \$\par
{-} \$  0 \textbackslash{}cdot  0 =  0 \$\par
{-} \$  20 \textbackslash{}cdot  12 =  240 \$\par
{-} \$ {-}4 \textbackslash{}cdot  0 =  0 \gc{\$}\par
\vspace{0.3\baselineskip}
\gc{So} \gc{sum} \gk{=} \$  0 +  0 +  0 +  1176 +  0 +  240 +  0 =  1416 \$\par
\vspace{0.3\baselineskip}
\gc{Compute} second sum: \$ \textbackslash{}sum y\_i x\_\{i+1\} \$\par
\vspace{0.3\baselineskip}
{-} \$  0 \textbackslash{}cdot  0 =  0 \$\par
{-} \$  6 \textbackslash{}cdot  40 =  240 \$\par
{-} \$ {-}30 \textbackslash{}cdot  28 = {-}840 \$\par
{-} \$  0 \textbackslash{}cdot  0 =  0 \$\par
{-} \$  42 \textbackslash{}cdot  20 =  840 \$\par
{-} \$  0 \textbackslash{}cdot {-}4 =  0 \$\par
{-} \$  12 \textbackslash{}cdot  0 =  0 \$\par
\vspace{0.3\baselineskip}
So sum = \$  0 +  240 {-}  840 +  0 +  840 +  0 +  0 =  240 \$\par
\vspace{0.3\baselineskip}
\gk{Now}\gc{:}\par
\vspace{0.3\baselineskip}
\$\$\par
\textbackslash{}text\{Area\} = \textbackslash{}frac\{1\}\{2\} \textbackslash{}left|  1416 {-}  240 \textbackslash{}right| = \textbackslash{}frac\{1\}\{2\} \textbackslash{}cdot  1176 =  588\par
\$\$\par
\vspace{0.3\baselineskip}
{-}{-}{-}\par
\vspace{0.3\baselineskip}
\#\#\# \gk{Final} Answer:\par
\vspace{0.3\baselineskip}
\$\$\par
\textbackslash{}boxed\{588\}\par
\$\$\par
\par}
\end{sraexbox}
\end{figure*}
\clearpage

\end{document}